\documentclass[twoside]{article}

\usepackage[preprint]{aistats2027}

\usepackage{xr-hyper}
\usepackage{authblk}
\usepackage[utf8]{inputenc} 
\usepackage[T1]{fontenc}    

\usepackage{wrapfig,lipsum,booktabs}

\usepackage{dsfont}
\usepackage{paralist}
\usepackage{booktabs}       
\usepackage{nicefrac}       
\usepackage{multirow}
 \usepackage{xargs}
\usepackage{subcaption}
\usepackage{amsthm}
\usepackage{bm}
\usepackage{amssymb,mathrsfs}
\usepackage{amsfonts}
\usepackage{mathtools}
\usepackage{upgreek}
\usepackage{graphicx}
\usepackage{wrapfig}
\usepackage[dvipsnames]{xcolor}
\usepackage{soul}
\usepackage{pifont}
\usepackage{bbm}
\usepackage[colorlinks = true, citecolor = blue]{hyperref}
\usepackage{algpseudocode,algorithm,algorithmicx}
\usepackage{stmaryrd}
\usepackage{array}
\usepackage[inline]{enumitem}

\usepackage{tikz}
\usepackage{pgfplots}
\usepackage{aliascnt}
\usepackage{todonotes}
\usepackage{cleveref}
\usepackage{autonum}
\usepackage{accents}
\usepackage{etaremune}
\DeclareMathAlphabet{\mathpzc}{OT1}{pzc}{m}{it}

\definecolor{lightred}{rgb}{1, 0.8, 0.8}

\makeatletter
\newtheorem{theorem}{Theorem}
\crefname{theorem}{theorem}{Theorems}
\Crefname{Theorem}{Theorem}{Theorems}

\newtheorem*{lemma_nonumber*}{Lemma}

\newtheorem{lemma}{Lemma}

\newaliascnt{corollary}{theorem}

\aliascntresetthe{corollary}
\crefname{corollary}{corollary}{corollaries}
\Crefname{Corollary}{Corollary}{Corollaries}

\newaliascnt{proposition}{theorem}
\newtheorem{proposition}[proposition]{Proposition}
\aliascntresetthe{proposition}
\crefname{proposition}{proposition}{propositions}
\Crefname{Proposition}{Proposition}{Propositions}

\newaliascnt{definition}{theorem}

\aliascntresetthe{definition}
\crefname{definition}{definition}{definitions}
\Crefname{Definition}{Definition}{Definitions}

\newaliascnt{remark}{theorem}
\newtheorem{remark}[remark]{Remark}
\aliascntresetthe{remark}
\crefname{remark}{remark}{remarks}
\Crefname{Remark}{Remark}{Remarks}

\crefname{example}{example}{examples}
\Crefname{Example}{Example}{Examples}

\crefname{figure}{figure}{figures}
\Crefname{Figure}{Figure}{Figures}

\newtheorem{assumption}{\textbf{H}\hspace{-3pt}}
\crefformat{assumption}{{\textbf{H}}#2#1#3}

\crefformat{assumptionsup}{{\textbf{A}}#2#1#3}

\newcommand{\chname}[1]{\color{gray}\textbf{Solution}}

\newcommandx{\psr}[3][3=]{\left\langle#1,#2 \right\rangle_{#3}}
\newcommandx{\normr}[2][2=]{ \left\Vert#1 \right\Vert_{#2}}
\newcommandx{\psrLigne}[3][3=]{\langle#1,#2 \rangle_{#3}}
\newcommandx{\normrLigne}[2][2=]{ \Vert#1 \Vert_{#2}}

\newcommandx{\norm}[2][1=]{\ifthenelse{\equal{#1}{}}{\left\Vert #2 \right\Vert}{\left\Vert #2 \right\Vert^{#1}}}

\newcommand\probaMarkovTilde[2][2=]
{\ifthenelse{\equal{#2}{}}{{\widetilde{\mathbb{P}}_{#1}}}{\widetilde{\mathbb{P}}_{#1}\left[ #2\right]}}

\def\eqsp{\;}

\newcommand\sequence[3][2=,3=]
{\ifthenelse{\equal{#3}{}}{\ensuremath{\{ #1_{#2}\}}}{\ensuremath{\{ #1_{#2}, \eqsp #2 \in #3 \}}}}
\newcommand\sequenceD[3][2=,3=]
{\ifthenelse{\equal{#3}{}}{\ensuremath{\{ #1_{#2}\}}}{\ensuremath{( #1)_{ #2 \in #3} }}}

\newcommand\sequenceDouble[4][3=,4=]
{\ifthenelse{\equal{#3}{}}{\ensuremath{\{ (#1_{#3},#2_{#3}) \}}}{\ensuremath{\{  (#1_{#3},#2_{#3}), \eqsp #3 \in #4 \}}}}

\newcommand{\beq}{\begin{equation}}
\newcommand{\eeq}{\end{equation}}

\newcommandx{\gperthmc}[2][1=,2=]{\ifthenelse{\equal{#1}{}}{\Xi}{\ifthenelse{\equal{#2}{}}{\Xi_{h,#1}}{\Xi_{#2,#1}}}}
\newcommandx{\Phiverlet}[2][1=,2=]{\ifthenelse{\equal{#1}{}}{\Phi}{\Phi_{#1}^{\circ (#2)}}}
\newcommandx{\gpertub}[2][1=,2=]{\ifthenelse{\equal{#1}{}}{g}{g_{#1}^{#2}}}
\newcommandx{\Phiverletq}[2][1=,2=]{\ifthenelse{\equal{#1}{}}{\widetilde{\Phi}}{\widetilde{\Phi}_{#1}^{\circ (#2)}}}
\newcommandx{\Phiverletqi}[2][1=,2=]{\ifthenelse{\equal{#1}{}}{\bar{\Psi}}{\bar{\Psi}_{#1}^{(#2)}}}
\newcommandx{\Pkerhmc}[2][1=,2=]{\ifthenelse{\equal{#1}{}}{\mathrm{P}}{\mathrm{P}_{#1, #2}}}
\newcommandx{\tPkerhmc}[2][1=,2=]{\ifthenelse{\equal{#1}{}}{\tilde{\mathrm{P}}}{\tilde{\mathrm{P}}_{#1, #2}}}
\newcommandx{\PkerhmcD}[2][1=,2=]{\ifthenelse{\equal{#1}{}}{\mathrm{K}}{\mathrm{K}_{#1, #2}}}

\newcommand{\fracD}[2]{(#1/#2)}

\begin{document}

\twocolumn[

\aistatstitle{Extreme Binary Classification: Extreme Value Theory for Extreme Constraint on False Negative}

\aistatsauthor{ Samuel Gruffaz \And Muhammad Fawad \And Jaakko Nevalainen  }

\aistatsaddress{ Tampere university, Finland  } ]

\begin{abstract}
 While binary classification is one of the most extensively studied problems in machine learning,
  the regime in which the goal is to learn a classifier with an almost zero false negative rate remains largely unexplored.
   In this paper, we introduce the Extreme Binary Classification problem, where the objective is to learn a classifier whose false negative rate $\alpha$ is constrained by $\epsilon_{N_1}=o_{N_1\to\infty}(1/N_1)$, with $N_1$ denoting the number of positive examples in the training set.
 To address this problem, we propose a threshold adaptation method theoretically grounded in guarantees derived from Extreme Value Theory, together with a feature selection procedure based on a permutation test applied to sample maxima.
 Experimental results on four real-world datasets of varying sizes demonstrate that our approach compares favorably with state-of-the-art methods.
  In addition, we illustrate its interpretability through an application to a cancer screening dataset.

\end{abstract}





\section{Introduction}
Binary classification is one of the most fundamental and widely studied problems in machine learning \cite{kotsiantis2007supervised}.
 Its applications range from process automation and decision support to the analysis of the conditional distribution of a binary target variable $Y \in \{0,1\}$ given feature vectors $X \in \mathcal{X} \subset \mathbb{R}^d$.
   Before deploying a classifier in medical or industrial settings, it is crucial to assess and control the risks associated with misclassification \cite{tong2016survey,soize2017uncertainty,angelopoulos2024conformal,angelopoulos2023conformal}.

\begin{figure*}[t] 
    \centering 
    \includegraphics[width=0.32\textwidth]{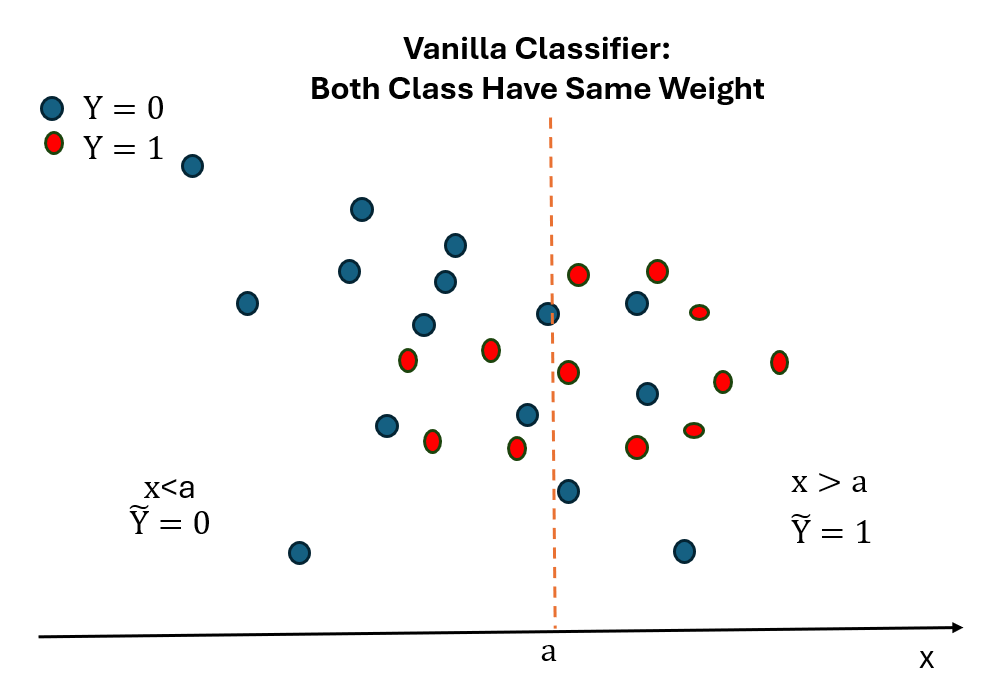}
    \hfill \includegraphics[width=0.32\textwidth]{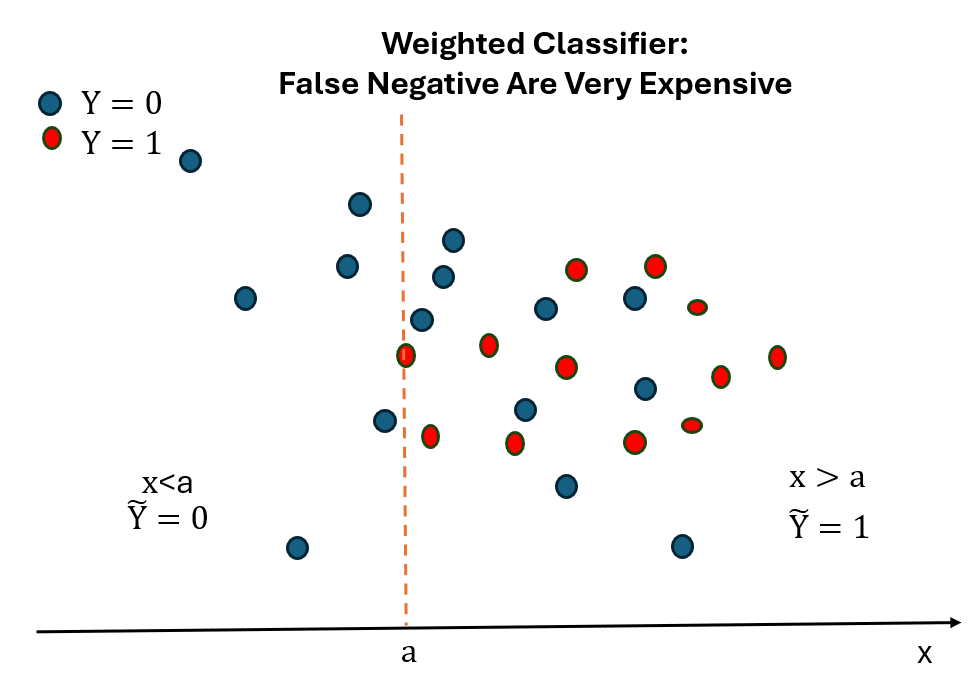} 
    \hfill \includegraphics[width=0.32\textwidth]{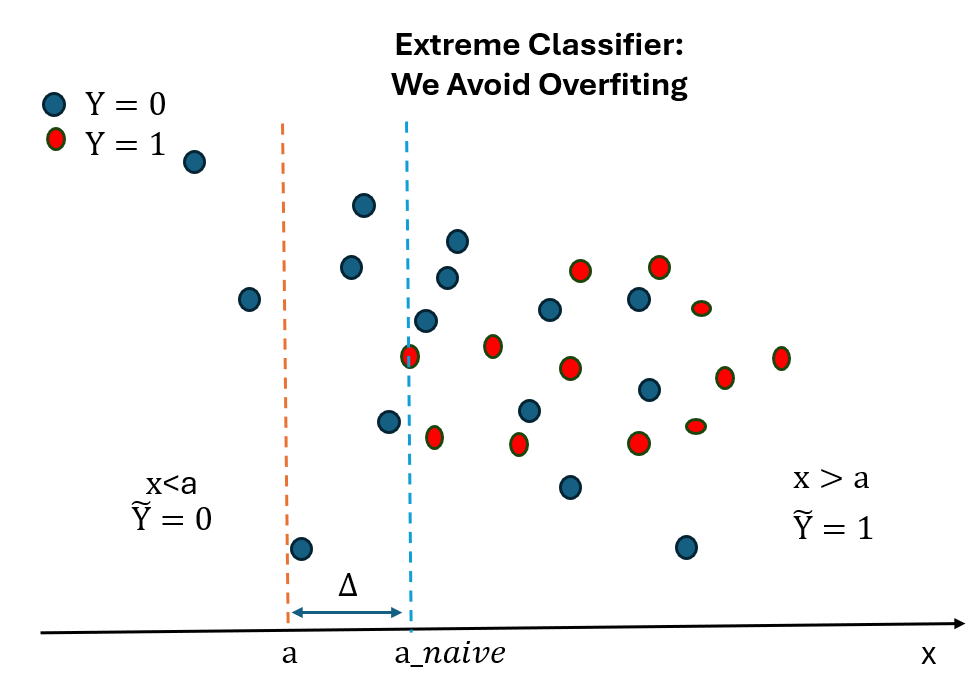} 
    \caption{ Illustration of the progression from a standard classifier (left) to a cost-sensitive classifier (middle) and finally to our extreme classifier (right). As the cost of false negatives increases, the decision threshold is shifted to favor sensitivity.
     Our extreme classifier further adjusts the threshold to provide strong control over false negatives while mitigating overfitting. }
      \label{fig:intro_banner} 
    \end{figure*}
In many real-world applications, binary classification is imbalanced in two distinct ways \cite{wang2021review}.
 First, one class $Y=1$ may be severely underrepresented in the available data. Second, the cost of misclassifying this class can be substantially higher than that of misclassifying the majority class.
  Such situations arise because the positive events of interest are often already rare due to existing preventive or monitoring procedures. Examples include equipment failures in industrial systems, cancer diagnoses in healthcare, or, in the most critical scenarios, patient mortality.
 In these applications, the false negative rate must be minimized despite having only a limited number $N_1$ of observations from the positive class $Y=1$.

Several approaches have been proposed to address this challenge, including data augmentation techniques \cite{wang2025comprehensive}, which oversample the minority class; cost-sensitive learning methods \cite{araf2024cost}, which reweight the training objective to emphasize costly errors; and threshold adaptation methods \cite{angelopoulos2024conformal,tong2018neyman},
 which estimate a decision threshold $\tau$ to control the false negative rate through a classifier of the form $ g(X)=\mathbbm{1}\!\left(f(X)>\tau\right)$. 
These methods are effective when the target false negative rate satisfies $\alpha \geq 1/N_1$, corresponding to a setting in which at least one false negative can be tolerated within the positive training samples.
 However, when the desired constraint becomes more stringent, namely $\alpha < 1/N_1$, existing approaches appear insufficient.
 Indeed, achieving such a level of reliability requires extending the decision boundary beyond the empirical support of the observed positive examples as depicted in \Cref{fig:intro_banner}.
The regime $\alpha < 1/N_1$ is particularly relevant in safety-critical applications, where even a single false negative may have severe consequences \cite{awan2019machine}. 
More broadly, it characterizes situations in which a positive prediction can be issued under extremely high confidence, which is also of independent interest in exploratory data analysis.
In this work, we refer to this setting as \textit{Extreme Binary Classification}.

 \paragraph{Contributions}
In this paper, we address this gap by leveraging Extreme Value Theory (EVT) \cite{de2006extreme} to satisfy the extreme false negative constraint $\alpha=o(1/N_1)$:
\begin{itemize}
    \item We introduce \textit{Extreme Binary Classification} and explain why EVT provides a natural framework for this problem (\Cref{sec:methodo}).
     Under mild assumptions on the tail behavior of the score $f(X)$, we show that the false negative rate can be controlled at the level $\alpha=o(1/N_1)$ with high probability using a tractable threshold estimator (\Cref{theorem:controlling_alpha}) provided that an exponential concentration inequality for the empirical mean excess applies (\Cref{prop:excess}).
    \item We develop a practical method inspired by \Cref{theorem:controlling_alpha} that can be applied either to a given score function $f$ or directly to the feature space $X$ in a decision-tree fashion, thereby improving interpretability. In particular, we introduce a feature selection mechanism based on a statistical permutation test having a closed form (\Cref{perumation:maxima}) comparing the extrema of the empirical distributions of the positive and negative classes.
     \item We demonstrate the effectiveness of the proposed method on four real-world datasets, benchmarking it against state-of-the-art approaches \cite{angelopoulos2024conformal,tong2018neyman,araf2024cost,wang2025comprehensive,liu2009easyensemble}.
     We further illustrate its interpretability on a clinical cancer-screening dataset (\Cref{sec:experiments}), where false negatives may result in missed diagnoses with potentially severe consequences. \end{itemize}

\section{Methodology}
\label{sec:methodo}
First, we formulate the Extreme Binary Classification problem (\Cref{subsec:EBC}). We then review the related literature and discuss the limitations of existing approaches in this setting (\Cref{subsec:literature_review}).
 Next, we revisit key results from Extreme Value Theory (\Cref{subsec:EVT}) and establish conditions under which the Extreme Binary Classification problem admits a solution (\Cref{theorem:controlling_alpha}, \Cref{prop:excess}).
 Finally, we introduce a classification algorithm and a feature selection strategy derived from our theoretical findings (\Cref{subsec:extreme_algo}).

 All proofs are postponed to appendix (\Cref{appendix:proofs}). For any $N \geq 1$, we let $[N]=\{1,\ldots,N\}$, and denote by $\mathbbm{1}$ the indicator function.

 \subsection{Extreme Binary Classification}
 \label{subsec:EBC}

Consider a binary classification dataset $ \mathcal{D}=(x_i,y_i)_{i\in[N]} \in (\mathbb{R}^d \times \{0,1\})^N, $
 where the observations are generated as i.i.d. samples from the random variables $(X,Y)$.
  Let $N_0$ and $N_1$ denote the numbers of samples satisfying $y_i=0$ and $y_i=1$, respectively.
   Our goal is to learn a classifier $ g:\mathbb{R}^d \mapsto \{0,1\}, $ with an almost zero false negative rate, ideally corresponding to at most one false negative in the worst of worst-case scenario.
    Formally, we seek to enforce the constraint $ \mathbb{P}(g(X)\neq Y \mid Y=1)\leq\alpha \ll 1/N_1$.
     Under this regime, if the classifier is applied to $N_1$ new positive samples, the probability of observing at least one false negative is approximately $\alpha N_1 \ll 1$.
      The trivial classifier $g \equiv 1$ is not of interest, as our objective is also to minimize the number of false positives.
       Such stringent control of the false negative rate is highly desirable when the classifier is embedded within a decision-making system, where false negatives may lead to severe consequences.
        For example, in a clinical setting, a false negative prediction may prevent a patient from receiving appropriate treatment.
         Beyond safety-critical applications, this property is also valuable from an exploratory perspective. A classifier with an almost zero false negative rate identifies regions of the feature space that can be distinguished from the positive class with extremely high confidence, thereby revealing which characteristics reliably separate the distributions of $X \mid Y=0$ and $X \mid Y=1$.

 \paragraph{One dimensional case}
As a first step, consider the one-dimensional setting ($d=1$) and the classifier $ g_{a,b}(x)=\mathbbm{1}(x\in [a,b]), $ where $(a,b)\in \mathbb{R}^2 \cup \{-\infty,+\infty\}$ are two thresholds to be estimated.
 Although this family of decision functions is highly restrictive, it is also highly interpretable and can easily be integrated into human decision-making processes, such as clinical workflows.
  Moreover, when greater flexibility is required, the feature values can be defined as $x_i=f(z_i)$, where $f$ denotes the score produced by a pre-trained classifier.
   \vspace{0.2cm} A natural approach consists of solving the following empirical risk minimization problem:
    \begin{align} \label{eq:naive_classifier} 
        &a_{\text{naive}},b_{\text{naive}} \in \\
        &\arg\min_{a,b:\,\sum_{i:y_i=1}\mathbbm{1}\!\left(g_{a,b}(x_i)\neq 1\right)=0} \frac{1}{N}\sum_{i=1}^{N} \mathbbm{1}\!\left(g_{a,b}(x_i)\neq y_i\right).
     \end{align}
\begin{lemma}
    \label{lemma:naive}
  $a_{\text{naive}}=\min \{ x_i: y_i=1\},b_{\text{naive}}=\max\{ x_i: y_i=1\} $ is a solution of \eqref{eq:naive_classifier}.
\end{lemma}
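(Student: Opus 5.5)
We argue directly from the structure of the feasible set. Write $m_1=\min\{x_i:y_i=1\}$ and $M_1=\max\{x_i:y_i=1\}$; we assume $N_1\geq 1$, so these are finite.

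\emph{Step 1: feasibility.} The constraint $\sum_{i:y_i=1}\mathbbm{1}(g_{a,b}(x_i)\neq 1)=0$ says that every positive $x_i$ lies in $[a,b]$. This is equivalent to $a\leq m_1$ and $b\geq M_1$. In particular $(m_1,M_1)$ is feasible, and every feasible interval contains $[m_1,M_1]$.

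\emph{Step 2: reduction of the objective.} For any feasible $(a,b)$, all positive samples are correctly classified. Hence
\[
\frac1N\sum_{i=1}^N\mathbbm{1}(g_{a,b}(x_i)\neq y_i)=\frac1N\#\{i:y_i=0,\ x_i\in[a,b]\}.
\]
So the objective is just the count of negatives falling in $[a,b]$, scaled by $1/N$.

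\emph{Step 3: monotonicity.} Take any feasible $(a,b)$. Since $[m_1,M_1]\subseteq[a,b]$, every negative sample in $[m_1,M_1]$ also lies in $[a,b]$. Therefore
\[
\#\{i:y_i=0,\ x_i\in[m_1,M_1]\}\leq\#\{i:y_i=0,\ x_i\in[a,b]\}.
\]
This shows $(m_1,M_1)$ attains the minimum over the feasible set, so it is a solution of \eqref{eq:naive_classifier}.

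The only point needing care is that the constraint is really ``for all positives'' and not an average, so feasibility forces set inclusion. Minor edge cases, such as $N_1=0$ or the extended values $\pm\infty$ in the definition of $g_{a,b}$, do not affect the argument. Minimality of the feasible interval under inclusion, combined with monotonicity of the objective under inclusion, gives the result. The lemma only claims $(m_1,M_1)$ is \emph{a} solution, and uniqueness can fail, for instance when no negatives lie near the endpoints, so no further argument is needed.
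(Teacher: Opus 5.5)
Your proof is correct and follows essentially the paper's route: feasibility forces $[m_1,M_1]\subseteq[a,b]$, and the objective then reduces to counting negatives inside $[a,b]$. The paper goes slightly further by describing the whole set of minimizers (intervals that do not reach the nearest negative lying outside $[m_1,M_1]$), whereas your monotonicity-under-inclusion step shows directly that $(m_1,M_1)$ is optimal, which is all the lemma claims.
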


While straightforward, the estimator defined in \eqref{eq:naive_classifier} may be overly optimistic when the training set is small, as it ignores the finite-sample uncertainty associated with the empirical extrema.
 Consequently, the resulting classifier may underestimate the true false negative rate on unseen data. To account for this finite-sample effect, confidence margins $\Delta_a,\Delta_b>0$ should be introduced to ensure control of the false negative rate on the test set.
  Specifically, we consider the adjusted thresholds $ a=a_{\text{naive}}-\Delta_a, \qquad b=b_{\text{naive}}+\Delta_b $.
   To avoid the trivial solution $\Delta_a=\Delta_b=+\infty $, we ask $0\leq \Delta_b<\max\{ x_i: y_i=0\}- \max\{ x_i: y_i=1\},\, 0\leq \Delta_a<\min\{ x_i: y_i=1\}- \min\{ x_i: y_i=0\} $ as a feasibility condition.
    In our suggested method, we will apply a permutation test on the extrema of the distributions $x_i: y_i=0,1$ to ensure that a solution $(\Delta_a,\Delta_b)$ exists.
  Since the estimation of $\Delta_a$ is equivalent to that of $\Delta_b$ after applying the transformation $x \mapsto -x$ to the sample $(x_i)$, we restrict our attention to $\Delta_b$ in the following.

   \subsection{Related works}
    \label{subsec:literature_review}
 In the following, we review existing solution in the literature when $\alpha N=O(1)$ and we show that they fail in the regime $\alpha N=o(1) $.
\paragraph{Data augmentation and cost-sensitive learning. }
Data augmentation methods, popularized by SMOTE \cite{chawla2002smote}, and cost-sensitive learning approaches \cite{araf2024cost}, which increase the penalty associated with errors on the sensitive class, can be highly effective when the positive and negative classes are nearly separable.
 In the more realistic setting where the class distributions overlap, however, these methods primarily reduce false negatives on the empirical distribution.
  They provide no mechanism for estimating the safety margins $\Delta_a,\Delta_b>0$ required to control false negatives outside the range of the observed positive samples.
  \paragraph{Neyman-Pearson classification. }
More rigorously, Neyman--Pearson classification addresses the following problem
\begin{equation} 
    \min_{g \in \mathcal{H} : \mathbb{P}(g(X)\neq Y \mid Y=1)\leq \alpha} \mathbb{P}(g(X)\neq Y \mid Y=0), 
\end{equation} 
where $\mathcal{H}$ denotes a class of decision functions and $\alpha>0$ is a prescribed upper bound on the false negative rate.
 When replacing the population quantity by its empirical counterpart, 
 a common strategy is to enforce a slightly more conservative constraint $\alpha'=\alpha-\epsilon/\sqrt{N_1},$ in order to account for finite-sample uncertainty \cite{rigollet2011neyman}.
  Here, $\epsilon(L,\delta)=O(1)$ is a constant depending on the regularity parameter $L$ and the confidence level $1-\delta$, where $\delta>0$ denotes the probability of violating the false negative constraint.
  This approach implicitly requires $\alpha\sqrt{N_1}=\Omega(1),$ so that the correction term remains smaller than the target error level.
   However, this assumption is violated in our setting, where $\alpha N_1=o(1)$.

  \paragraph{Threshold adaptation. }
  Most classfiers are expressed as $g(X)=\mathbbm{1}(\phi(X)\geq \eta)$ where $\phi$ is a \textit{scoring function} and $\eta>0$ is a threshold.
  Threshold adaptation methods suggest to control the false negative rate through a post-processing step rather than during training \cite{tong2018neyman,angelopoulos2024conformal,bates2021distribution}.
 Methods, such as the Neyman--Pearson umbrella algorithm \cite{tong2018neyman}, use a validation set containing positive samples to calibrate $\eta$, ensuring $\mathbb{P}(g(X)\neq Y\mid Y=1)\leq \alpha$ with confidence $1-\delta$ without any parametric assumption.
 However, in the extreme regime $\alpha=o(1/N_1)$, the resulting guarantees become vacuous since the confidence level $1-\delta\sim_{\alpha N_1\to 0} \alpha N_1$ necessarily tends to zero. In particular, the Neyman--Pearson umbrella algorithm  becomes equivalent to \eqref{eq:naive_classifier} in this regime as it relies on rank statistics, this is detailed in \Cref{discussion_NP}.

 This kind of post-processing step was generalized to other constraints by upper bounding the empirical risk, as suggested in conformal risk control \cite{angelopoulos2024conformal} or in risk-controlling prediction sets \cite{bates2021distribution}, but the same bottleneck regarding the lower bound on $\delta$ or $\alpha$ applies.




 \paragraph{Ensemble methods. } Ensemble methods \cite{liu2009easyensemble,vasconcelos2025mitigating} combine several of the aforementioned approaches and often aggregate predictions conservatively, for example by taking their maximum, in order to reduce the risk of false negatives. Although these methods tend to perform well in practice, they do not explicitly address the extreme regime considered in this work.

 The limits of current methods seem to ask at least $\alpha N_1\geq 1 $ to keep assumption general. If we aim to go beyond this limitation, we have to be more specific on the assumptions.

  \subsection{Background on Extreme Value Theory} 
  \label{subsec:EVT}
  In the remainder of this section, $(U_i)_{i\geq 0}$ denotes a sequence of i.i.d. continuous random variables with distribution $U \sim \mathbb{P}(\cdot \mid Y=1)$, and $U_{i,n}$ denotes the $i$th \textit{smallest} observation among the sample $U_1,\ldots,U_n$, for any $i\in[n]$ and $n\geq 1$, such that $b_{\text{naive}}(N_1)=U_{N_1,N_1}$.
  $U$ is used instead of $X$ to emphasize that we study the distribution of $X \mid Y=1$, rather than that of $X$.
   If we aim to construct a procedure that is more conservative than the naive solution of \Cref{lemma:naive}, we must make assumptions on the tail behaviour of the distribution of $U$ in order to obtain a meaningful estimate of the confidence width $\Delta_b$.
    In this setting, \textit{extreme value theory} (EVT) provides the natural framework \cite{de2006extreme,charras2013extreme,alves2014general}. 
     Ideally, we would like to upper bound the maximum of the distribution with confidence level $1-\delta$, or at least an extreme quantile, so that for some sequence satisfying $\alpha_{N_1}=o(1/N_1)$, 
     \begin{align} 
        \label{eq:goal_estimate}
        &\mathbb{P}\!\left(\alpha_{N_1}^* \leq \alpha_{N_1}(\delta)\right) \geq 1-\delta, \qquad \\
        &\alpha_{N_1}^* = \mathbb{P}\!\left( U_{N_1+1} > b_{\text{naive}}+\Delta_b(\delta) \mid (U_i)_{i\in[N_1]} \right). 
    \end{align}
 Here, $\alpha_{N_1}^*$ denotes the false negative rate of the classifier $g_{-\infty,b_{\text{naive}}+\Delta_b}$ learned from $(U_i)_{i\in[N_1]}$ and $U_{N_1+1}$ models a new observation lying outside the training set. 
 We note that EVT has recently been used to calibrate decision thresholds in medical testing \cite{pugh2024estimating}. Its application to classification can therefore be viewed as a natural generalization of these ideas.

   \paragraph{An asymptotic theorem}
   Extreme Value Theorem plays for empirical maxima the same role that the Central Limit Theorem (CLT) plays for empirical means: after an appropriate rescaling, both converge to universal limiting distributions that are largely independent of the underlying data-generating distribution.

   \begin{theorem}{\cite{charras2013extreme}[Theorem 2.3]}
    \label{theorem:EVT}
    If there exist two sequences $(a_n)_{n\geq 0},(b_n)_{n\geq 0}$ such that the rescaled maximum $M_n^{a,b}=(U_{n,n}-b_n)/a_n$ converge in distribution,
     there exist $(\xi,\mu,\sigma)\in \mathbb{R}^2\times \mathbb{R}_{>0}$ such that the CDF $H$ of the limiting distribution is:
    \begin{equation}
        H(x)=G(\xi,\mu,\sigma)(x)=\exp(-(1+\xi\frac{(x-\mu)}{\sigma})_+^{-\frac{1}{\xi}})
    \end{equation}
    using the limit $\lim_{\xi\to 0}(1+\xi x)_+^{-\frac{1}{\xi}}=\exp(-x)$ when $\xi = 0$. $G(\xi,\mu,\sigma)$ is the CDF of what is called a Generalized Extreme Value distribution (GEV). 
   \end{theorem}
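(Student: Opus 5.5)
The statement is the Fisher--Tippett--Gnedenko theorem. I would prove it through the classical convergence-of-types and max-stability route.

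\emph{Step 1: reduction to max-stability.} Let $F$ be the CDF of $U$. Then $\mathbb{P}(M_n^{a,b}\le x)=F^n(a_nx+b_n)\to H(x)$ at continuity points of $H$, and we assume $H$ is non-degenerate (implicit in the statement, with $\sigma>0$). Fix an integer $k\ge 1$. Along the subsequence $nk$ we have
\begin{equation}
F^{nk}(a_{nk}x+b_{nk})\to H(x) \quad\text{and}\quad F^{nk}(a_nx+b_n)=\big(F^n(a_nx+b_n)\big)^k\to H^k(x).
\end{equation}
Khintchine's convergence-of-types lemma applies to two affine normalizations of the same random variable $U_{nk,nk}$ that both have non-degenerate limits. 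It gives $a_{nk}/a_n\to A_k>0$ and $(b_{nk}-b_n)/a_n\to B_k$, with $H^k(A_kx+B_k)=H(x)$. So $H$ is max-stable: for every $k$, $H^k$ is of the same type as $H$. This extends to real $t>0$ by taking $k=\lfloor nt\rfloor$, which gives functions $A(t),B(t)$ with $H^t(A(t)x+B(t))=H(x)$.

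\emph{Step 2: solving the functional equation.} Set $\psi(x)=-\log(-\log H(x))$ on the interval where $0<H<1$. The relation becomes $\psi(A(t)x+B(t))=\psi(x)-\log t$. Composing for $s$ and $t$ yields $A(st)=A(s)A(t)$ and $B(st)=A(s)B(t)+B(s)$.

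By the monotonicity of $H$, $A$ is measurable. The multiplicative Cauchy equation then forces $A(t)=t^{\theta}$ for some $\theta\in\mathbb{R}$, and I set $\xi=\theta$ up to sign conventions. I then split into two cases.
\begin{itemize}
\item If $\theta=0$, then $B(st)=B(s)+B(t)$, so $B(t)=c\log t$. Inverting $\psi$ gives $\psi^{-1}(y)=\psi^{-1}(0)-cy$, hence $H(x)=\exp(-e^{-(x-\mu)/\sigma})$, which is the Gumbel case $\xi=0$.
\item If $\theta\neq 0$, the symmetry of the $B$ equation gives $B(t)(1-A(s))=B(s)(1-A(t))$. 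Hence $B(t)=c(1-t^{\theta})$, and $A(t)x+B(t)-x_0=t^\theta(x-x_0)$ with fixed point $x_0=c$. The equation becomes $\psi(x_0+t^\theta(x-x_0))=\psi(x)-\log t$. This forces $-\log H(x)=\big(\lambda(x-x_0)\big)^{-1/\theta}$ on one side of $x_0$, with $H$ equal to $0$ or $1$ on the other side. This is exactly $(1+\xi(x-\mu)/\sigma)_+^{-1/\xi}$ after a reparametrization of $(\mu,\sigma)$. The Fréchet case corresponds to $\xi>0$ and the reversed Weibull case to $\xi<0$.
\end{itemize}

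\emph{Main obstacle.} The delicate step is Step 1. It requires proving (or citing) the convergence-of-types lemma and extending max-stability from integer $k$ to real $t$ with measurable $A,B$. The extension needs care with the subsequence $\lfloor nt\rfloor$ and with continuity points of $H$. Everything afterwards is elementary functional-equation analysis.

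Since the paper cites this result from \cite{charras2013extreme}[Theorem 2.3], I would in practice defer to that reference (or to \cite{de2006extreme}) for the convergence-of-types lemma.
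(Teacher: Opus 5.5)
The paper does not prove this statement. It is the Fisher--Tippett--Gnedenko theorem, quoted from \cite{charras2013extreme}, so your outline supplies the standard textbook proof (convergence of types $\Rightarrow$ max-stability $\Rightarrow$ Cauchy-type equations for $A,B$) rather than competing with an argument in the paper. That proof is correct.

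The non-degeneracy assumption you add is genuinely needed, not just implicit. Without it the statement as written is false: for $U$ uniform on $[0,1]$ with $a_n=n$, $b_n=0$, the limit is the point mass at $0$, which is not a GEV with $\sigma>0$. You also need $a_n>0$.

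There are two slips to fix.
\begin{itemize}
\item From $H^t(A(t)x+B(t))=H(x)$ one gets $\psi(A(t)x+B(t))=\psi(x)+\log t$, not $-\log t$. You can check this on the Gumbel law, where $A\equiv 1$ and $B(t)=\log t$. With the correct sign, $A(t)=t^{\xi}$, so $\theta=\xi$ exactly and no ``sign convention'' is needed. Your later exponent $-1/\theta$ already presupposes this corrected sign.
\item For real $t$ you should replace $nk$ by $\lfloor nt\rfloor$, rather than set $k=\lfloor nt\rfloor$. Then compare the normalizations $(a_n,b_n)$ and $(a_{\lfloor nt\rfloor},b_{\lfloor nt\rfloor})$ of $U_{\lfloor nt\rfloor,\lfloor nt\rfloor}$, using $F^{\lfloor nt\rfloor}(a_nx+b_n)=\bigl(F^n(a_nx+b_n)\bigr)^{\lfloor nt\rfloor/n}\to H^t(x)$.
\end{itemize}

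The points you gloss over are easy to fill in.
\begin{itemize}
\item For fixed $x$ with $0<H(x)<1$, the map $t\mapsto A(t)x+B(t)$ is strictly increasing, because its image under $H$ is $H(x)^{1/t}$. This gives measurability of both $A$ and $B$, which the Cauchy equations require.
\item In the case $\theta\neq 0$, the set $\{0<H<1\}$ is invariant under $x\mapsto x_0+t^{\theta}(x-x_0)$. It cannot contain $x_0$, since that would force $\log t=0$. Hence it is a half-line ending at $x_0$, which is exactly what produces the positive part $(\cdot)_+$ in the GEV formula.
\end{itemize}
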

   \begin{remark}
     $\xi$ is a parameter invariant under affine transformations of $M_n^{a,b}$, which makes it intrinsic to $\mathbb{P}(\cdot \mid Y=1)$. The right end-point is $M_F=\sup \{ x\in \mathbb{R}: G(\xi,\mu,\sigma)(x)<1\}=\mu+\sigma/|\xi|$ when $\xi<0$ and is infinite otherwise.
   \end{remark}
   If $\xi>0$ the tail are heavy such that $\Delta_b$ might needed to be too high in practice to yields any true negative, so we assume that $\xi\leq 0$ such that the maximum distribution has a light tail. These different behaviors are illustrated in \Cref{fig:EVT_illustration}.
   Contrary to the Central Limit Theorem, establishing convergence in distribution for $(M_n^{a,b})_{n\geq 0}$ is generally not straightforward. Nevertheless, such convergence holds for a wide range of commonly encountered distributions, including the normal, log-normal, and exponential distributions. In the remainder of this work, we therefore make the following mild assumption.
    \begin{assumption} \label{assumption:EVT}
         There exist two sequences $(a_n)$ and $(b_n)$ such that $(b_{\text{naive}}(N_1)-b_{N_1})/a_{N_1} $ converges in distribution, so that \Cref{theorem:EVT} applies with $\xi\leq 0$. 
        \end{assumption}

    \paragraph{Estimating the parameter}
   In order to establish \eqref{eq:goal_estimate} using an empirical estimator $\Delta_b$, we must estimate the parameters $\xi,\mu_{N_1},\sigma_{N_1} \in \mathbb{R}^2 \times \mathbb{R}_{>0}$ from the data so that $\mathbb{P}\!\left(b_{\text{naive}}(N_1)\leq x\right) \approx G(\xi,\mu_{N_1},\sigma_{N_1})(x) $. 
    These parameters cannot be reliably estimated from a single realization of the maximum $b_{\text{naive}}$. Instead, we must exploit the information contained in the $K$ largest observations, $ \left(U_{N_1-K+i,N_1}\right)_{i\in[K]}$,  in the hope of obtaining a consistent estimator. This idea is formalized by the following proposition, commonly known as the \emph{Peaks-Over-Threshold} (POT) principle.

    \begin{proposition}{\cite{charras2013extreme}[Proposition 2.10]}
        \label{prop:POT}
    Assume \Cref{assumption:EVT}, then there exists $\xi>0,\beta:\mathbb{R}\to \mathbb{R}_{>0}$ such that the tails are well approximated by Generalized Pareto Distribution $G_{\xi,\beta(u)} $
    \begin{align}
        &\lim_{u\to M_F}\sup_{y\in [0,x_f-u]} |F_u(y)-G_{\xi,\beta(u)} (y)|,\\
        & F_u(y)=\mathbb{P}(U-u \leq y|U\geq u),\\
        & 1-G_{\xi,\beta(u)} (y)=\left(1+\xi \frac{y}{\beta(u)}\right)^{-\frac{1}{\xi}} \eqsp.
    \end{align}
    Moreover, the parameter $\xi$ has the same value as in the GEV given \Cref{theorem:EVT} and if $\xi<0$, the right end point of $G_{\xi,\beta}$ is $M_G=\beta/|\xi|$ such that $M_F\approx u+\beta /|\xi|$.
    \end{proposition}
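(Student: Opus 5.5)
The plan is the classical Pickands--Balkema--de Haan argument. It converts the max-domain-of-attraction hypothesis \Cref{assumption:EVT} into a statement about conditional excesses, and then upgrades pointwise convergence to uniform convergence.

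The statement as written has two slips, and I will prove the corrected version. First, the claim should be that $\xi$ is the GEV index from \Cref{theorem:EVT}, not that $\xi>0$, since we work under $\xi\le 0$. Second, the limit of the supremum is meant to be $0$.

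\textbf{Step 1 (from maxima to tails).} Under \Cref{assumption:EVT}, continuity of $F$ and independence give $F^{n}(a_n x+b_n)\to G(\xi,\mu,\sigma)(x)$. Absorbing $\mu,\sigma$ into $a_n,b_n$, I may assume $\mu=0,\sigma=1$. Taking logarithms and using $-\log(1-p)\sim p$ as $p\to0$ then yields
\begin{equation}
n\bigl(1-F(a_n x+b_n)\bigr)\to (1+\xi x)_+^{-1/\xi}
\end{equation}
for every $x$ with $1+\xi x>0$.

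\textbf{Step 2 (from the integer index $n$ to a continuous threshold $u$).} I would pass from integer $n$ to real $t$ by setting $a(t)=a_{\lfloor t\rfloor}$ and $b(t)=b_{\lfloor t\rfloor}$. By the convergence-of-types theorem, $a_{n+1}/a_n\to1$ and $(b_{n+1}-b_n)/a_n\to0$, so Step 1 remains valid along real $t\to\infty$.

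For $u<M_F$, I then take $t=t(u)$ with $b(t)\approx u$ (for instance $t=1/(1-F(u))$ and $b(t)=F^{\leftarrow}(1-1/t)$) and set $\beta(u)=a(t(u))$. Dividing the limit at $x=y$ by the limit at $x=0$ gives
\begin{equation}
\frac{1-F(u+y\beta(u))}{1-F(u)}\to(1+\xi y)_+^{-1/\xi}\qquad(u\to M_F),
\end{equation}
which is exactly $1-F_u(y\beta(u))\to 1-G_{\xi,1}(y)$ pointwise.

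This step is the main obstacle. The difficulty is that $b(t)$ must be matched to an arbitrary $u$ even though $F$ may have flat parts or atoms-free jumps in its quantile function. I would handle it by sandwiching $u$ between $b(\lfloor t\rfloor)$ and $b(\lfloor t\rfloor+1)$, using monotonicity of $F$ together with the two convergence-of-types relations above.

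\textbf{Step 3 (uniformity and the endpoint).} Both $y\mapsto F_u(y\beta(u))$ and its limit $G_{\xi,1}$ are monotone distribution functions, and the limit is continuous. A Pólya-type lemma therefore upgrades pointwise convergence to uniform convergence in $y$. Since $G_{\xi,\beta}(y)=G_{\xi,1}(y/\beta)$, rescaling gives
\begin{equation}
\sup_{y\in[0,M_F-u]}\bigl|F_u(y)-G_{\xi,\beta(u)}(y)\bigr|\to0 .
\end{equation}

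For $\xi<0$, the Generalized Pareto Distribution $G_{\xi,\beta}$ vanishes in its upper tail exactly at $y=\beta/|\xi|$. The excess distribution $F_u$ reaches $1$ at $M_F-u$. Uniform closeness of the two distribution functions near level $1$, combined with the strict decrease of $1-G_{\xi,1}$ near its endpoint, gives $(M_F-u)/\beta(u)\to1/|\xi|$, that is, $M_F\approx u+\beta(u)/|\xi|$.
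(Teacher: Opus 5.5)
The paper gives no proof of this proposition. It quotes it from \cite{charras2013extreme}; it is the Pickands--Balkema--de Haan theorem. Your route is the standard one: derive the tail limit $n(1-F(a_nx+b_n))\to(1+\xi x)^{-1/\xi}$, pass to a continuous threshold, then upgrade to uniform convergence by a P\'olya-type lemma. Your two corrections of the statement are right. Steps 1--3, up to the uniform approximation, work with one repair. The sandwich $b_{\lfloor t\rfloor}\le u<b_{\lfloor t\rfloor+1}$ needs nondecreasing centering constants tending to $M_F$, and the $b_n$ of \Cref{assumption:EVT} need not be monotone. So you must commit to $b_n=F^{\leftarrow}(1-1/n)$. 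That choice is admissible because Step 1 at $x=\pm\varepsilon$ gives $(F^{\leftarrow}(1-1/n)-b_n)/a_n\to0$. With it, and with $F$ continuous, $t=1/(1-F(u))$ does give the sandwich.

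The genuine gap is the endpoint claim. Uniform closeness of $y\mapsto F_u(y\beta(u))$ to $G_{\xi,1}$ only rules out $F_u$ reaching $1$ too early. If $(M_F-u)/\beta(u)\le c<1/|\xi|$ along a sequence, then $F_u(c\beta(u))=1>G_{\xi,1}(c)$, a contradiction. This yields $\liminf_{u\to M_F}(M_F-u)/\beta(u)\ge1/|\xi|$ and nothing more.

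The reverse inequality cannot follow from Kolmogorov-distance closeness alone, since $G_{\xi,1}\equiv1$ on $[1/|\xi|,\infty)$. An excess law that puts a vanishing mass far beyond $\beta(u)/|\xi|$, or even has unbounded support, is still uniformly close to $G_{\xi,1}$. For the same reason, you use $M_F<\infty$ without proof.

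What is missing is an argument exploiting that all the $F_u$ come from a single $F$. For example, apply your ratio limit at $u$ and at $v=u+x\beta(u)$ and use convergence of types; this gives $\beta(u+x\beta(u))/\beta(u)\to1+\xi x$. Now fix $x\in(0,1/|\xi|)$ and a small $\eta>0$, and iterate $u_{k+1}=u_k+x\beta(u_k)$ from some $u$ close to $M_F$. Then $\beta(u_k)\le(1-|\xi|x+\eta)^k\beta(u)$. Also $(1-F(u_{k+1}))/(1-F(u_k))$ stays below $(1-|\xi|x)^{1/|\xi|}+\eta<1$. Hence $u_k\uparrow u_\infty\le u+x\beta(u)/(|\xi|x-\eta)$ with $F(u_\infty)=1$. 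This gives both $M_F<\infty$ and $\limsup(M_F-u)/\beta(u)\le1/|\xi|$.

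Alternatively, invoke the Weibull-domain characterization: $M_F<\infty$ and $1-F(M_F-1/\cdot)$ is regularly varying with index $1/\xi$. Then $|\xi|(M_F-u)$ is itself an admissible auxiliary function, and convergence of types gives $\beta(u)\sim|\xi|(M_F-u)$.
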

    \Cref{prop:POT} is illustrated in \Cref{fig:POT_illustration}.
    The choice of $K$ is not straightforward in practice, although \Cref{prop:POT} guarantees the existence of a suitable value.
     Informally, the theorem states that, by choosing $K$ sufficiently large, the GEV shape parameter $\xi$ can be estimated by fitting a Generalized Pareto Distribution $G_{\hat{\xi},\hat{\beta}}$ to the samples $\left(U_{N_1-K+i,N_1}\right)_{i\in[K]}$ using $u=U_{N_1-K,N_1}$.
      Furthermore, \textbf{when $\xi<0$}, the upper endpoint can be estimated by $\hat{M}_F=U_{N_1-K,N_1}+\hat{\beta}/|\hat{\xi}|$.
       In the small-sample setting, however, the estimation of $\xi$ may be unstable when $\xi<0$. 
        To address this issue, \cite{alves2014general} proposed the following endpoint estimator:
    \begin{align}
        \label{eq:end_point_estimate}
       & \hat{M}_F=U_{N_1,N_1}+\sum_{i=0}^{K-1} a_{i,K}(U_{N_1-K,N_1}-U_{N_1-K-i,N_1}), \\
        &\quad a_{i,K}=\log((K+i+1)/(K+i))/\log(2)
    \end{align}
    which is proved to converge towards $M_F$ as long as $N_1\to \infty$, $K\to \infty$ and $K/N_1 \to 0$. As a rule of thumb, we verify the above condition by setting $K=\sqrt{N_1}$.
    \textbf{In the case of $\xi=0$}, $G_{0,\beta}=1-\exp(-\cdot/\beta)$ is the exponential of parameter $1/\beta$ and $\beta$ is estimated with the empirical mean excess, 
    \begin{equation}
        \label{eq:beta}
    \hat{\beta}=\frac{1}{K}\sum_{i=1}^K U_{N_1-K+i,N_1}-U_{N_1-K,N_1}\eqsp .
    \end{equation}

     which is directly used when computing the quantile of the distribution $ G_{0,\hat{\beta}}^{-1}(Q)=-\hat{\beta} \log(1-Q)$. This will be used in the following to estimate $\Delta_b$.


    \subsection{Extreme classification}
    \label{subsec:extreme_algo}

    We now derive a reasonable estimator $\Delta_b$ satisfying \eqref{eq:goal_estimate} under suitable assumptions.

     \begin{theorem}
        \label{theorem:controlling_alpha}
     Assume \Cref{assumption:EVT} and that the Generalized Pareto approximation is converging fast, i.e. there exists $\epsilon_0: \mathbb{R}_{>0}\mapsto \mathbb{R}$ s.t. $\lim_{u\to +\infty} \epsilon_0(u)=0$ and
     \begin{equation}
        \label{eq:fast_converging}
     \sup_{y\in [0,x_f-y]} |F_u(y)-G_{\xi,\beta(u)} (y)| = \epsilon_0(u) \mathbb{P}(U\geq u),
     \end{equation} 
     moreover assume that we access an estimator $\Delta_b\left((U_{N_1-K+i,N_1})_{i\in[K]},\delta_{\text{est}}\right) $ such that for $K= \lfloor\sqrt{N_1}\rfloor+1,  Q_K=o_{K\to \infty}(1/K)$ and any $\xi\leq 0$, we have
     \begin{equation}
        \label{eq:estimator_cond}
           \mathbb{P}\left( (1-G_{\xi,\beta(U_{N_1-K,N_1})})(\Delta_b)\leq Q_K\right)\geq 1-\delta_{\text{est}}
        \end{equation}
        then, there exists $\epsilon_1(N_1)=o_{N_1\to \infty}(1/N_1)$, s.t for any $\delta_{\text{quant}}>0,\delta_{\text{est}}>0$, denoting by $\rho(\delta_{\text{quant}})=4\left(1+\sqrt{\log(2/\delta_{\text{quant}})/2}\right)^2$, 
     \begin{align}
        &\mathbb{P}\left(\mathbb{P}(U_{N_1+1}>b_{\text{naive}}+\Delta_b | (U_i)_{i\in [N_1]})\leq \epsilon(N_1)\rho(\delta_{\text{quant}}) \right) \\
        &\geq 1-\delta_{\text{quant}}-\delta_{\text{est}}
     \end{align}
     \end{theorem}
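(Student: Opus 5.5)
The plan is to decompose the conditional false negative rate at the random threshold $u=U_{N_1-K,N_1}$. Since $b_{\text{naive}}\geq u$ and $\Delta_b\geq 0$, the event $\{U_{N_1+1}>b_{\text{naive}}+\Delta_b\}$ is contained in $\{U_{N_1+1}-u>\Delta_b\}$. The new observation $U_{N_1+1}$ is independent of the sample, so conditionally on $(U_i)_{i\in[N_1]}$ we can write
\begin{align}
\mathbb{P}\big(U_{N_1+1}>b_{\text{naive}}+\Delta_b \mid (U_i)\big)&\leq \bar F(u)\,\big(1-F_u(\Delta_b)\big)\\
&\leq \bar F(u)\Big[(1-G_{\xi,\beta(u)})(\Delta_b)+\epsilon_0(u)\bar F(u)\Big],
\end{align}
where $\bar F(u)=\mathbb{P}(U\geq u)$. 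The second inequality uses the fast-convergence assumption \eqref{eq:fast_converging} at the threshold $u$. On the event of probability at least $1-\delta_{\text{est}}$ given by \eqref{eq:estimator_cond}, the bracket is at most $Q_K+\epsilon_0(u)\bar F(u)$.

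The second step controls $\bar F(u)$, the tail mass above the $(N_1-K)$-th order statistic. By continuity, $V_i=\bar F(U_i)$ are i.i.d. uniform, and $\bar F(U_{N_1-K,N_1})$ is the $(K+1)$-th smallest uniform. I will use the DKW inequality: with probability at least $1-\delta_{\text{quant}}$,
\begin{equation}
\sup_t\big|\hat F_{N_1}(t)-t\big|\leq \sqrt{\log(2/\delta_{\text{quant}})/(2N_1)}.
\end{equation}
This gives
\begin{equation}
\bar F(u)\leq \frac{K+1}{N_1}+\sqrt{\frac{\log(2/\delta_{\text{quant}})}{2N_1}}\leq \frac{2}{\sqrt{N_1}}\Big(1+\sqrt{\log(2/\delta_{\text{quant}})/2}\Big),
\end{equation}
using $K+1\leq \sqrt{N_1}+2$ and absorbing constants for $N_1$ large. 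Squaring this bound produces exactly $\rho(\delta_{\text{quant}})/N_1$, which explains the form of $\rho$.

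Combining the two steps on the intersection of the two events, which has probability at least $1-\delta_{\text{quant}}-\delta_{\text{est}}$ by a union bound, the FNR is at most
\begin{equation}
\bar F(u)Q_K+\epsilon_0(u)\bar F(u)^2\leq \rho(\delta_{\text{quant}})\Big[\tfrac{Q_K}{\sqrt{\rho}\sqrt{N_1}}+\tfrac{\epsilon_0(u)}{N_1}\Big].
\end{equation}
Since $Q_K=o(1/K)=o(1/\sqrt{N_1})$ and $\rho\geq 4$, the first term is $o(1/N_1)$. The second term is $o(1/N_1)$ provided $\epsilon_0(u)\to0$. I then define $\epsilon_1(N_1)$ as the maximum of these deterministic rates.

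The main obstacle is that $u$ is random. Making $\epsilon_0(u)$ deterministic requires $u\to M_F$ uniformly on the good event. On that event, $\bar F(u)\leq c/\sqrt{N_1}\to0$, so $u\geq \bar F^{-1}(c/\sqrt{N_1})\to M_F$. I then take $\tilde\epsilon_0(N_1)=\sup_{v\geq \bar F^{-1}(c/\sqrt{N_1})}\epsilon_0(v)$, which tends to $0$. This requires interpreting $\lim_{u\to+\infty}$ as $u\to M_F$, and possibly replacing $\epsilon_0$ by its monotone envelope.

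A second, milder issue is that \eqref{eq:estimator_cond} is stated with $\beta(U_{N_1-K,N_1})$ evaluated at the random threshold. This is consistent with conditioning at $u$, so no extra work should be needed beyond checking that the events are measurable with respect to the sample.
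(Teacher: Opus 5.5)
Your proposal follows essentially the same route as the paper's proof: you decompose the conditional false negative rate at $u=U_{N_1-K,N_1}$, use monotonicity via $b_{\text{naive}}\geq u$, combine the fast-convergence bound on the excess distribution with the estimator condition, bound the tail mass $\mathbb{P}(U\geq u)$ by DKW, and finish with a union bound; your direct one-sided bound is a slightly cleaner version of the paper's comparison with the proxy $P_{\text{pareto}}$, which replaces $\mathbb{P}(U\geq u)$ by $K/N_1$. The one caveat, which the paper's proof shares implicitly through $o_{K\to\infty}$ terms evaluated at the random threshold, is that your envelope $\tilde\epsilon_0$ depends on $\delta_{\text{quant}}$ through $c$, so you strictly obtain a $\delta_{\text{quant}}$-dependent $\epsilon_1$ rather than the single $\epsilon_1$ valid for every $\delta_{\text{quant}}$ that the statement's quantifiers require.
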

     Assumptions \Cref{assumption:EVT} and \eqref{eq:fast_converging} jointly imply that the tail of the positive-class distribution $\mathbb{P}(\cdot \mid Y=1)$ is well approximated by a GPD.
     The assumption \eqref{eq:fast_converging} is satisfied for GPD, or student distribution of parameter $\nu> 2$, but is not satisfied by certain distributions, such as the log-normal distribution, for which the convergence to the GPD limit is known to be slow.
      Consequently, the present analysis should be viewed as an optimistic scenario.
      \begin{remark}
        \label{remark:conservative_estimate}
      Note that the same result is obtained when using $U_{N_1-K,N_1}$ instead of $b_{\text{naive}}$. We nevertheless chose to retain $b_{\text{naive}}$ for consistency with our experiments, which favor conservative threshold estimates. 
      \end{remark}

    In the following proposition, we show how to construct an estimator $\Delta_b$ that verifies \eqref{eq:estimator_cond} under the assumption that only the \textit{mean} of the generalized pareto distribution is exponentially well approximated as $K,N_1\to\infty$.
    \begin{proposition}
        \label{prop:excess}
        Let $K\geq 3$. Assume that there exists $c>0$ such that the samples $D_i^{N_1}=U_{N_1-K+i,N_1}-U_{N_1-K,N_1} $ for any $i\in [K]$ verifies the following concentration inequality for any $K\geq \sqrt{N_1}$, denoting by $\theta_k=c\sqrt{2\log(1/\delta_{\text{est}})/K}$,
        \begin{equation}
            \label{eq:concentration}
            \mathbb{P}\left( \mathbb{E}_{D\sim G_{\xi,\beta(U_{N_1-K,N_1})}}(D)-\sum_{i=1}^K \frac{D_i^{N_1}}{K}\geq \theta_k \right)\leq \delta_{\text{est}}
        \end{equation}
        then denoting by $\eta_K=-(1+\exp(-1))\log(1/(K\log(K))) $,
        \begin{equation}
            \label{eq:Delta_theo}
        \Delta_b= \left(\sum_{i=1}^K \frac{D_i^{N_1}}{K}+\theta_k\right)\eta_K
        \end{equation}
        verifies assumption \eqref{eq:estimator_cond}.
    \end{proposition}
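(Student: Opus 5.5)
We need to show that, on an event of probability at least $1-\delta_{\text{est}}$,
\[
(1-G_{\xi,\beta})(\Delta_b)\leq Q_K \quad\text{with } Q_K=o(1/K),
\]
where $\beta=\beta(U_{N_1-K,N_1})$ is fixed by conditioning.

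\textbf{Step 1: the good event.} Let $m=\mathbb{E}_{D\sim G_{\xi,\beta}}(D)$. Let $E$ be the event that $\bar D_K+\theta_K\geq m$, where $\bar D_K=\sum_i D_i^{N_1}/K$. By \eqref{eq:concentration}, $\mathbb{P}(E)\geq 1-\delta_{\text{est}}$. Since $1-G_{\xi,\beta}$ is nonincreasing, on $E$ we get $\Delta_b\geq m\,\eta_K$, hence
\[
(1-G_{\xi,\beta})(\Delta_b)\leq (1-G_{\xi,\beta})(m\,\eta_K).
\]
It therefore suffices to prove the deterministic bound
\[
\sup_{\xi\leq 0,\ \beta>0} \bigl(1-G_{\xi,\beta}\bigr)(m(\xi,\beta)\,\eta_K)\leq Q_K=o(1/K).
\]

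\textbf{Step 2: the deterministic bound.} For $\xi<1$ the mean is $m=\beta/(1-\xi)$. By scale invariance we may set $\beta=1$ and $s=-\xi\geq 0$, so $m=1/(1+s)$.

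\emph{Exponential case $\xi=0$.} Here $m=\beta$ and
\[
1-G_{0,\beta}(\beta\eta_K)=\exp(-\eta_K)=\bigl(K\log K\bigr)^{-(1+e^{-1})},
\]
which is $o(1/K)$ since the exponent exceeds $1$.

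\emph{Case $\xi<0$.} We need
\[
\bigl(1-s\,\eta_K/(1+s)\bigr)_+^{1/s}\leq \exp(-\eta_K)\quad\text{for all } s>0.
\]
Use $\log(1-x)\leq -x$ to get
\[
\tfrac1s\log\bigl(1-s\eta_K/(1+s)\bigr)\leq -\eta_K/(1+s).
\]
This only yields $\exp(-\eta_K/(1+s))$, which is weaker than needed for large $s$. Compensate as follows. For large $s$, $s\eta_K/(1+s)\to\eta_K>1$ once $K$ is moderately large (here $K\geq 3$ matters). Then the positive part vanishes as soon as $s\eta_K/(1+s)\geq 1$, i.e.\ $s\geq 1/(\eta_K-1)$, and the tail is exactly $0$ beyond the endpoint. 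On the remaining range $s<1/(\eta_K-1)$ we have $1+s\leq \eta_K/(\eta_K-1)$, so
\[
\eta_K/(1+s)\geq \eta_K-1 .
\]
This gives the bound
\[
Q_K=\exp(-(\eta_K-1))=e\,(K\log K)^{-(1+e^{-1})}=o(1/K).
\]
The factor $(1+e^{-1})$ in $\eta_K$ is presumably designed to absorb exactly this kind of loss. I would first try this two-regime split and check the constants. If it fails, I would refine the bound with $\log(1-x)\leq -x-x^2/2$.

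\textbf{Step 3: uniformity and measurability.} The bound from Step 2 is uniform in $\beta$ and $\xi\leq 0$, so it holds conditionally on $U_{N_1-K,N_1}$. It therefore holds unconditionally on $E$, which gives \eqref{eq:estimator_cond}.

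\textbf{Main obstacle.} The hard part is Step 2 for $\xi<0$: controlling the GPD tail uniformly over the shape parameter, which requires the endpoint truncation argument. The rest is monotonicity plus the concentration assumption.
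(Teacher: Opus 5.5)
Your proof is correct, but Step 2 takes a different route from the paper.

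\textbf{The paper's route.} The paper proves a separate lemma. For every $\xi\le 0$, $\beta>0$ and $0<Q\le e^{-1}$, the exact $(1-Q)$-quantile $\beta(1-Q^{\gamma})/\gamma$ (with $\gamma=-\xi$) is at most $(1+e^{-1})\log(1/Q)\,\beta/(1+\gamma)$. It shows this by proving that $h(\gamma)=\gamma(1+e^{-1})z-(1+\gamma)(1-e^{-\gamma z})$, with $z=\log(1/Q)\ge 1$, is nondecreasing from $h(0)=0$. The factor $1+e^{-1}$ is exactly what absorbs the term $\gamma z e^{-\gamma z}\le e^{-1}$ appearing in $h'$. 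Taking $Q=1/(K\log K)$, which is $\le e^{-1}$ for $K\ge 3$, gives the level-exact value $Q_K=1/(K\log K)$. This matches the rule of thumb $Q_b\approx 1-1/(K\log K)$ used in the algorithm. The monotonicity-on-the-concentration-event step is then the same as your Step 1.

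\textbf{Your route.} You replace the derivative analysis by $\log(1-x)\le -x$ plus the finite endpoint $\beta/|\xi|$. The split is legitimate because $\eta_K$ is increasing in $K$ with $\eta_3\approx 1.63>1$. You obtain $Q_K=e\,(K\log K)^{-(1+e^{-1})}$. Compared with the paper:
\begin{itemize}
\item It loses a factor $e$, so it is not a clean level-$Q$ guarantee for small $K$.
\item It is asymptotically smaller than $1/(K\log K)$.
\item It shows that the factor $1+e^{-1}$ is not needed for the $o(1/K)$ conclusion: any $\eta_K=c\log(K\log K)$ with $c\ge 1$ gives $Q_K=e\,(K\log K)^{-c}$.
\end{itemize}
So the factor is not designed to absorb your kind of loss; in the paper it serves the derivative bound.

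\textbf{Two cosmetic fixes.} First, drop the sentence ``we need $\ldots\le\exp(-\eta_K)$ for all $s>0$''. You never use it, and it is false whenever $\eta_K<2$: expanding gives $\tfrac1s\log(1-s\eta_K/(1+s))=-\eta_K+s\eta_K(1-\eta_K/2)+O(s^2)$, and $\eta_3<2$. State the target $Q_K=e\,e^{-\eta_K}$ directly. Second, drop the hedge about refining with $\log(1-x)\le -x-x^2/2$, since the two-regime split already closes the argument.
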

    The interest of the previous proposition is to suggest an estimator independant of $\xi$ and which has a closed-formed.
    Such concentration inequality arises when $D_i^{N_1} $ is i.i.d given $U_{N_1-K,N_1}$ with sub-exponential tails and $|\mathbb{E}(D_i^{N_1})-\mathbb{E}_{D\sim G_{\xi,\beta(U_{N_1-K,N_1})}}(D)|\propto 1/\sqrt{K} $, which is very likely to be verified when $K$ and $N_1$ are large as rank statistics $\left(U_{N_1-K+i,N_1}\right)_{i\in[K]}$ are approximately independent and the mean of the GPD and $\mathbb{P}(U |U\geq U_{N_1-K,N_1}) $ are expected to match.
    The last insight remains quite challenging to prove rigorously, so we have preferred to use a concentration inequality that tackle the bottleneck of the estimation.
    In the idealized setting where $ D_i^{N_1} \overset{\mathrm{i.i.d.}}{\sim} \mathrm{Exp}(1/\beta)$,  
    the constant is given by $c=\sqrt{2}\,\beta$. In practice, however, the constant $c$ is generally unknown. In our numerical experiments, we use \Cref{prop:excess} as a rule of thumb and set $ \Delta_b = -\log(Q)\sum_{i=1}^{K} D_i^{N_1}/N_1, $ with $Q$ chosen according to the desired trade-off between the true negative rate and the false negative rate.

    We are now ready to state the practical algorithm.
    \paragraph{One dimensional Algorithm. }
    Since our entire reasoning becomes meaningless when $\max \{x_i:y_i=1\}\geq \max \{x_i:y_i=0\}$, we first apply a permutation test \cite{ernst2004permutation} using $T=\max\{x_i:y_i=0\}-\max\{x_i:y_i=1\}$ as test statistic to assess whether $T>0$ with $p<0.05$.
     The null distribution of $T$ is usually estimated by repeatedly permuting the class labels with Monte Carlo simulation, however we managed to derive a closed-form for the $p$-values making the test extremely fast without approximation error, this is described in detail in \Cref{appendix:bootstrap}-(\Cref{perumation:maxima}). 
     If the maxima are not significantly different, we set $\Delta_b=\infty$.
      Otherwise, we apply the statistical test of \cite{charras2013extreme}[Theorem 6] to determine whether $H_0:\xi=0$ or $\bar{H}_0:\xi<0$.
       If $\xi=0$, we set 
       \begin{equation}
        \label{eq:practical_Delta_b}
       \Delta_b=-\hat{\beta}\log(1-Q_b)
       \end{equation}   using \eqref{eq:concentration} and a confidence level $Q_b\in(0,1)$.
        Otherwise, we set \[ \Delta_b=\hat{M}_F-b_{\text{naive}} \] using \eqref{eq:end_point_estimate}.
        The only parameter that must be specified is the confidence level $Q_b\in(0,1)$.
         In practice, it can be selected by enforcing a minimum true negative rate $\mathrm{Card}\{x_i\geq b_{\text{naive}}+\Delta_b : y_i=0\}/(N-N_1)\geq \lambda$ on the training set,
          or by following the rule of thumb provided in \Cref{prop:excess}, namely, 
          \begin{equation} 
            Q_b\approx 1-\frac{1}{K\log(K)} \approx 1-\frac{1}{\sqrt{N_1}\log(\sqrt{N_1})},
         \end{equation}
          so that $N_1=100$ yields $Q_b\approx 0.95$ when aiming to achieve a false negative rate of order $o(1/N_1)$.
           The construction of $\Delta_a$ is entirely symmetric: it is obtained by applying the transformation $x\mapsto -x$ to the data and using a confidence level $Q_a\in(0,1)$.

           The previous algorithm doesn't strictly follow \eqref{eq:Delta_theo}, but yields similar results on synthetic experiments (\Cref{appendix:synthetic}) and verify the false negative constraint $1/(N_1\log(N_1))$.
           We attribute this observation to the use of $b_{\text{naive}}$ rather than $U_{N_1-K,N_1}$ in the threshold estimation procedure (see \Cref{remark:conservative_estimate}).

          Since the assumptions of \Cref{theorem:controlling_alpha} are difficult, and in some cases impossible, to verify in practice, the theoretical results should be viewed primarily as motivation for the design of the algorithm rather than as prescriptions for every implementation detail.
           In particular, the requirement that the false negative rate scales as $o(1/N_1)$ is best interpreted as a guiding principle for selecting the classification procedure and its associated threshold, rather than as a precise quantitative target that must be achieved in practice.
           Choosing a target false negative rate of $o(1/N_1)$ encourages extrapolation beyond the training set through nonzero margins $\Delta_a$ and $\Delta_b$ as depicted in \Cref{fig:intro_banner}, whereas achieving a rate of only $O(1/N_1)$ may be possible without any extrapolation, i.e., by taking $\Delta_a = \Delta_b = 0$.

    \paragraph{Multidimensional Algorithm. }
    In the case of multidimensional features $X=(X^{i})_{i\in [d]} $, we repeat the previous process on each coordinate, this yields $d$ classifiers, $g_{a_1,b_1},\ldots,g_{a_d,b_d}$ and we define $ g\triangleq \prod_{i=1}^d g_{a_i,b_i}  $.
    Note that the permutation test (\Cref{perumation:maxima}) applied to each coordinates offers a \text{features selection mechanism} revealing which features are interesting for a classification with an extreme constraint on false negative.
    The false negative rate is then at worst $\alpha d_{\text{select}}$ (\Cref{appendix:multidim}) if $d_{\text{select}}$ coordinates are effectively used and independent. In our experiments, we chose not to correct for this effect to avoid being overly conservative when the coordinates are correlated.
    We called our method \textit{Extreme thresholding}.

    Using our method, we consider the following variants in the experiments, which differ according to the choice of input features $X$:
 \begin{itemize} 
    \item \textbf{(Log)} We fit a multivariate logistic regression on the training set and apply one-dimensional Extreme thresholding to the resulting logistic scores, i.e. $f(x)=\beta^\top x$ if $\beta\in \mathbb{R}^d$ is the logistic coefficient. 
    \item \textbf{(Mul)} We apply multidimensional Extreme thresholding directly to the feature coordinates.
     \item \textbf{(MulLog)} We apply multidimensional Extreme thresholding to the feature coordinates augmented with the logistic regression score.
     \end{itemize}
      For each method, we use $Q_b \in \{0.1, 0.9, 0.95\}$ when defining $\Delta_b$ in order to assess the impact of the confidence level. Since $25 \leq N_1 \leq 500$, depending on the dataset, $Q_b$ should theoretically lie in the interval $(0.875, 0.985)$ to ensure that the false negative rate scales as $o(1/N_1)$.
       Consequently, $Q_b = 0.90$ and $Q_b = 0.95$ are reasonable choices. In contrast, $Q_b = 0.10$ is included as a representative setting in which the false negative rate is not particularly stringent.

   \paragraph{Complexity} Thanks to our closed-form expression for the permutation test $p$-value (\Cref{appendix:bootstrap}) and the simplicity of the proposed procedure, the overall computational complexity is $O\bigl(d(N+N_1\log N_1)\bigr)$,  which is lower than the training complexity of a decision tree, whose best-case scaling is $O(dN\log N)$.
    The term $dN_1\log N_1$ arises from the sorting of the feature values in the positive class, which is required to compute the thresholds introduced in \Cref{prop:excess}. If a score function $f$ is learned as well on the training set to preprocess the features, the complexity bottleneck is there, which is verified in our experiments training a logistic regression score as detailed in \Cref{subsection:computation_time}.






\section{Experiments}
\label{sec:experiments}
In this section, we compare our method against state-of-the-art approaches on four real-world binary classification datasets.
 We evaluate performance in terms of the average number of false negatives ($\text{FN}$) and true negatives ($\text{TN}$) obtained on the test set from 5-fold cross-validation repeated with 10 different random seeds.
  We also report $ \text{R}(c)=\text{TN}/\max(c,\text{FN})$ for $c\in \{0.1,1\}$, which provides a measure of the gain obtained relative to the number of missed positive cases.  Note that $\text{TN}$ and $\text{FN}$ are averaged across the 5 folds to reduce the influence on $\text{R}(c=0.1)$ of an unusually low number of false negatives in any single fold.
  We use $\max(c,\text{FN})$ to avoid singularity and to limit the interest of reducing FN in the case of $c=1$.
   Finally, we illustrate the interpretability of our feature selection mechanism on our in-house dataset, ScreeningX arising from a cancer screening trial that remains unnamed for anonymity reasons, and compare it with the feature importance measures produced by Gradient Boosting and a shallow Decision Tree.
    \paragraph{Datasets} 
    We consider two large and highly imbalanced datasets, Credit \cite{dal2017credit} ($N=284807$, $N_1=492$, $d=30$) and Corruption \cite{vasconcelos2021dataset} ($N=303036$, $N_1=428$, $d=14$), as well as two smaller datasets: Breast Cancer Wisconsin \cite{breastcancerwisconsin17} ($N=569$, $N_1=217$, $d=30$), which is known to be relatively well separated, and ScreeningX ($N=117$, $N_1=25$, $d=12$) which isn't well separated.
     For the Credit and Corruption datasets, the application context does not strongly penalize false negatives, and the classification task is primarily exploratory. In contrast, for Breast Cancer Wisconsin and ScreeningX, false negatives may have severe clinical consequences.
      Consequently, the Extreme Binary Classification framework is particularly relevant in these settings.
       Additional details on the datasets are provided in \Cref{appendix:datasets}.
            \begin{table*}[!t] 
        \centering 
        \caption{Summary of Benchmark Results, ranking is performed by $TN/\max(FN,0.1)$} 
        \label{tab:rank_summary}
         \small
          \begin{tabular}{lccccc}
             \toprule Rank & ScreenX & Corruption & Credit & Mean Rank \\
              \midrule 1 & Mul-Q=0.95 (Ours) & MulLog-Q=0.9 (Ours) & Mul-Q=0.1 & MulLog-Q=0.9 (Ours) \\
               2 & Mul-Q=0.9 (Ours) & MulLog-Q=0.95 (Ours) & Mul-Q=0.9 & MulLog-Q=0.95 (Ours) \\ 
               3 & Conservative Ensemble & MulLog-Q=0.1 (Ours) & Mul-Q=0.95 & Conservative Ensemble \\
                4 & MulLog-Q=0.95 (Ours) & Mul-Q=0.9 (Ours) & MulLog-Q=0.9 (Ours) & Mul-Q=0.9 (Ours)  \\
                 5 & CS-Logistic & Mul-Q=0.1 (Ours) & MulLog-Q=0.95 (Ours) & Mul-Q=0.95 (Ours) \\
                  \bottomrule 
                \end{tabular}
                \vspace{0.5em} 

     \end{table*}
       \paragraph{Competitors} 
       We considered as broad a range of competitors as possible, prioritizing widely used methods with publicly available implementations and their recommended default hyperparameters, with one exception for SVC detailed in \Cref{sec:parameter}.
    
            \textbf{Data augmentation methods:} EasyEnsemble, Balanced Random Forest, and Balanced Bagging from the \texttt{imbalanced-learn} package \cite{lemaavztre2017imbalanced}.

             \textbf{Cost-sensitive (CS) methods:} Logistic Regression, XGBoost, and Support Vector Classifier (SVC) from \texttt{scikit-learn} and the XGBoost package \cite{xgboost}.
              For all cost-sensitive methods, we assign a weight of $w=1000$ to class $1$, as increasing the weight further did not affect the results.

                \textbf{Naive Threshold Adaptation (NTA):} We select the decision threshold that maximizes $\text{Ratio}$ on a validation set comprising 20\% of the training data.

                \textbf{Risk-Controlled and Neyman--Pearson methods:} Whenever applicable, we use Risk-Conformal threshold adaptation (RC) \cite{angelopoulos2024conformal} implemented through MAPIE \cite{taquet2022mapie} using a confidence level of $0.7$ to avoid excessive conservativeness while maintaining meaningful risk control, and the Neyman--Pearson umbrella algorithm (NP) \cite{tong2018neyman} implemented through \textit{nproc} \cite{nproc}, both applied to Logistic Regression and tree-based ensemble models, using a validation set comprising 20\% of the training data.
                 The target false-negative rate is chosen as the smallest value that yields at least one prediction different from class 1.

                  \textbf{Conservative ensemble:} An ensemble that returns the maximum prediction among a Decision Tree, a Logistic Regression model, and a Random Forest from \texttt{scikit-learn}, each calibrated using NTA. 

    \textbf{Isolation Forest:} The anomaly detection method proposed in \cite{liu2008isolation}, converted into a binary classifier through score thresholding.
     The threshold is set to the highest anomaly score observed among positive training examples.

       \paragraph{Results}
       Complete results are provided in \Cref{appendix:table} and computation time in \Cref{subsection:computation_time}. Runtime measurements were obtained on a CPU-only server with 4\,GB of RAM.
        For the sake of brevity, we report in \Cref{tab:rank_summary} only the top 5 method for each dataset considering the \text{R}($c$), with $c=0.1$ since it better represents the goal of the paper.
         The last column reports the ranking induced by the average rank over all datasets.
                 The Breast Cancer dataset is included in the computation of the mean rank. Its results are not highlighted, as they are consistent with 
                 established findings showing that logistic regression based methods outperform the competing approaches. 
                 Note that, in \Cref{tab:rank_summary}, all of the top five methods achieve, on average, at most one false negative, with the exception of Conservative Ensemble on Credit, which yield more than one false negative on average.

        \paragraph{Discussion.} First, the results show that \textit{MulLog} (Ours) achieves the best performance on average. 
        The logistic regression score is particularly useful when the data classes might be linearly separated, as noticed on the Breast Cancer dataset.
         Conversely, on the ScreenX dataset, the logistic score is not needed,
         the vanilla \textit{Mul} (Ours) approach performs better. 
         Overall, the results suggest selecting $Q \geq 0.9$ when the objective is to achieve a high ratio while maintaining an 
         almost negligible number of false negatives.
          In contrast, smaller values such as $Q=0.1$ may be preferable when false negatives are less critical.
         Indeed, when considering the top five methods for \text{R}($c$) with $c=1$ in \Cref{tab:rank_summary_ratio_1}, \textit{MulLog}-$Q=0.1$ ranks first.

          Second, the Conservative Ensemble approach is Top 3 behind our methods. 
          While competitive, this method is not interpretable compared with
           \textit{MulLog}. 
           In comparison, CS-NTA-Logistic and NP-Logistic constitute cheap and interpretable.
            However, a fundamental limitation of these methods is the lower bound imposed on $\alpha$, which limits their interest when looking at $\text{R}(c=0.1)$.


        \paragraph{Interpretability of Feature Selection.}
We now study the impact of the different features on the predictions obtained on the ScreenX dataset.

Using 40 train-test splits, with 20\% of the data reserved for testing, we compute and average: the feature importances provided by a decision tree, a random forest, and XGBoost; the coefficients of a logistic regression model; and the frequency with which our permutation-based extrema test detects a significant difference between classes at the 5\% significance level.

This produces a table with six columns, one for each method, and 12 rows, one for each feature, as reported in \Cref{tab:feature_importance} in the Appendix.
 We then compare the five largest absolute scores produced by each method, excluding ours, in order to assess whether the different approaches agree on the most relevant features. The results are presented in \Cref{table:top_5}.

Our method identifies, with a frequency greater than 90\%, the top five features selected by each competing method. In addition, it rejects 8 out of the 12 features in more than 80\% of the 40 train-test splits.

Although feature importance scores are often difficult to interpret, our extrema-based test provides a direct assessment of whether extreme observations originate from class $Y=1$.
 For instance, our feature-selection procedure reveals that $X12$, the most influential feature, tends to predict class $Y=0$ only for large values. Indeed,
$
\max\{X12_i : y_i = 1\}\ll \max\{X12_i : y_i = 0\} ,
$
while
$
m_0 = \min\{X12_i : y_i = 0\} \approx \min\{X12_i : y_i = 1\} = m_1.
$
 Logistic regression coefficient may also provide a partial answer to this question, but it does not capture the full picture when the relationship between the feature and the outcome is weak or non-linear. 
 In our example, logistic regression indicates that $X12$ is negatively associated with $Y$, but it fails to reveal that $m_0 \approx m_1$.
  Consequently, one might incorrectly infer that the separation between the two classes occurs throughout the entire range of the feature, whereas the difference is primarily driven by the upper tail.

Finally, we verified that our method achieves a higher classification ratio on ScreenX than XGBoost while relying on only four feature-specific thresholds. This makes our decision rule applicable by practitioners even when the machine-learning classifier itself is unavailable. Such interpretability is particularly appealing in educational and clinical settings.

\paragraph{Limitations.} Our approach has three main limitations: 1. When the original features do not provide a suitable representation for the classification task, a preprocessing step based on a scoring model becomes essential. In practice, even a simple logistic regression score can substantially improve performance. 
2. Since the threshold is defined through a maximum operator, the method is sensitive to label noise and therefore requires relatively clean data. We investigated replacing the maximum with a quantile-based threshold to improve robustness. However, this modification increased the number of false negatives in noise-free settings. We therefore retained the maximum-based formulation for both empirical and theoretical reasons. 
3. The structure of $g_{a,b}$ implicitly assumes that the feature distribution is unimodal. As a result, the method may struggle on highly multimodal datasets. In such cases, clustering the data before applying our approach may be necessary.
    However, this limitation wasn't particularly apparent in our experiments. 

\section{Conclusion}
 The proposed method proves effective at maximizing the ratio of true negatives to false negatives, a criterion that is particularly relevant in applications where false positives incur a relatively low cost compared with false negatives. 
In addition, our approach provides a simple and interpretable feature selection mechanism, which enhances transparency and facilitates its adoption in clinical settings.
\section{AI Use Statement}
We used AI solely for drafting codes, rephrasing, grammar correction, and generating the dataset descriptions in \Cref{appendix:datasets}. \Cref{appendix:datasets} was carefully reviewed and verified by the authors.


%






\bibliographystyle{plain}
\bibliography{ref}

\section*{Checklist}



\begin{enumerate}

  \item For all models and algorithms presented, check if you include:
  \begin{enumerate}
    \item A clear description of the mathematical setting, assumptions, algorithm, and/or model. Yes in \Cref{sec:methodo}.
    \item An analysis of the properties and complexity (time, space, sample size) of any algorithm. Yes at the end of \Cref{sec:methodo}.
    \item (Optional) Anonymized source code, with specification of all dependencies, including external libraries. Yes
  \end{enumerate}

  \item For any theoretical claim, check if you include:
  \begin{enumerate}
    \item Statements of the full set of assumptions of all theoretical results. Yes.
    \item Complete proofs of all theoretical results. Yes.
    \item Clear explanations of any assumptions. Yes.     
  \end{enumerate}

  \item For all figures and tables that present empirical results, check if you include:
  \begin{enumerate}
    \item The code, data, and instructions needed to reproduce the main experimental results (either in the supplemental material or as a URL). Yes, except that ScreenX dataset is not given for anonymity.
    \item All the training details (e.g., data splits, hyperparameters, how they were chosen). Yes in \Cref{sec:experiments} and \Cref{sec:parameter}.
    \item A clear definition of the specific measure or statistics and error bars (e.g., with respect to the random seed after running experiments multiple times). Yes.
    \item A description of the computing infrastructure used. (e.g., type of GPUs, internal cluster, or cloud provider). Yes.
  \end{enumerate}

  \item If you are using existing assets (e.g., code, data, models) or curating/releasing new assets, check if you include:
  \begin{enumerate}
    \item Citations of the creator If your work uses existing assets. Yes.
    \item The license information of the assets, if applicable. Yes.
    \item New assets either in the supplemental material or as a URL, if applicable. [Not Applicable]
    \item Information about consent from data providers/curators. [Not Applicable]
    \item Discussion of sensible content if applicable, e.g., personally identifiable information or offensive content. [Not Applicable]
  \end{enumerate}

  \item If you used crowdsourcing or conducted research with human subjects, check if you include:
  \begin{enumerate}
    \item The full text of instructions given to participants and screenshots. [Not Applicable]
    \item Descriptions of potential participant risks, with links to Institutional Review Board (IRB) approvals if applicable. [Not Applicable]
    \item The estimated hourly wage paid to participants and the total amount spent on participant compensation. [Not Applicable]
  \end{enumerate}

\end{enumerate}

\appendix 
\onecolumn





\section{Illustration EVT Theory}

\begin{figure*}[!h] 
    \centering 
    \includegraphics[width=0.9\textwidth]{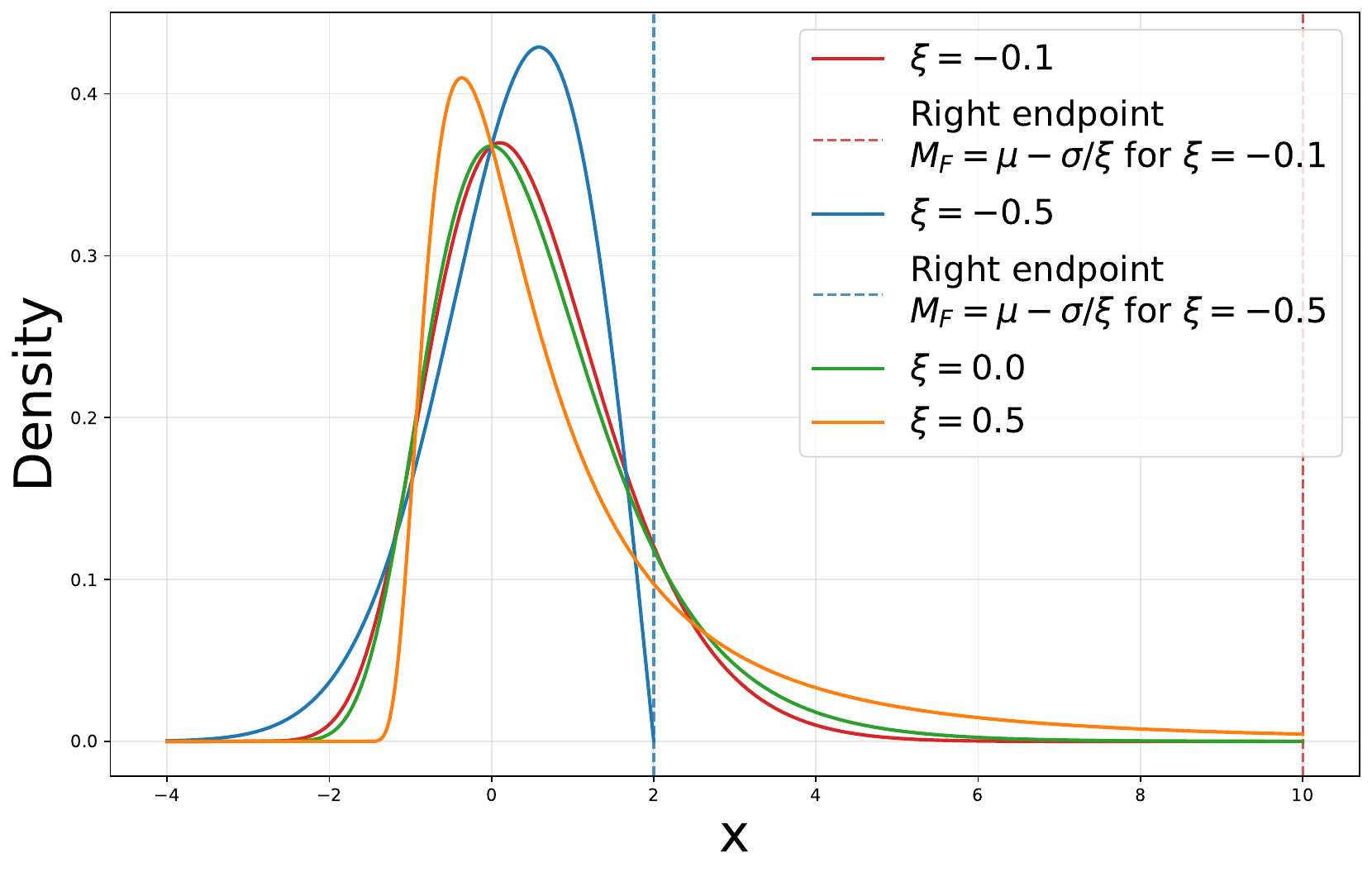}
    \caption{ Illustration of the Generalized Extreme Value (GEV) density for different values of the tail index $\xi$. }
      \label{fig:EVT_illustration} 
    \end{figure*}

\begin{figure*}[!h] 
    \centering 
    \includegraphics[width=0.9\textwidth]{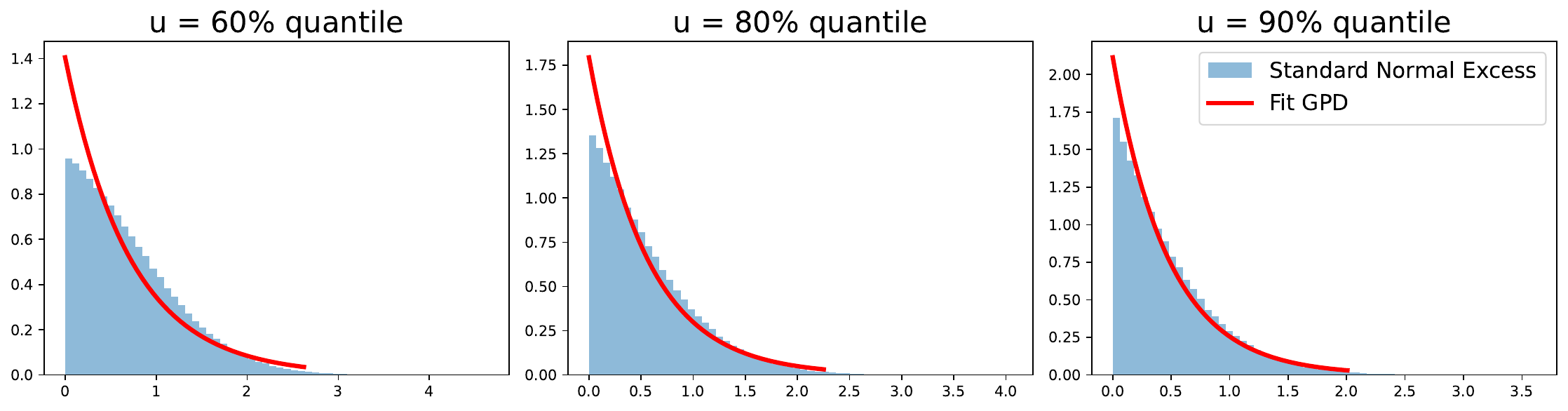}
    \caption{ Illustration of \Cref{prop:POT}. The histogram of the excesses $ E_k = N_k - Q_u \mid N_k > Q_u, \qquad N_k \sim \mathcal{N}(0,1), $ above the $u$-quantile $Q_u$ is well approximated by a GPD with $\xi=0$ when $u$ is sufficiently large. }
      \label{fig:POT_illustration} 
    \end{figure*}

\section{Experiments}



\subsection{Synthetic Experiments}
\label{appendix:synthetic}
In this section, we compare the performance of the theoretical estimator \eqref{eq:Delta_theo} for $\Delta_b$ with the practical estimator \eqref{eq:practical_Delta_b} used throughout our experiments.

 We consider a random variable $X_e$ following either an exponential distribution $\mathrm{Exp}(5)$ with scale parameter $5$,
  or a generalized Pareto distribution $G_{\xi,1}$ with tail index $\xi=-10^{-5}$, generated using \[ X_e=\frac{(1-U)^{-\xi}-1}{\xi}, \qquad U\sim\mathrm{Unif}([0,1]). \]
   Given a sample size $N_1$, we repeat the following procedure $10\,000$ times: 
   \begin{itemize} 
    \item Using $N_1$ i.i.d. 
    samples from $X_e$, we estimate $B(Q_b)=b_{\mathrm{naive}}+\Delta_b$ according to either \eqref{eq:Delta_theo} or \eqref{eq:practical_Delta_b}
     for all \begin{equation}
         1-Q_b \in \left\{ \frac{1}{K\log(K)}, \frac{1}{(K-1)\log(K)}, \ldots, \frac{1}{\log(K)} \right\}, 
        \end{equation} where $K=\lfloor \sqrt{N_1}\rfloor+1$. 
        \item Using $100\,000$ newly generated samples from $X_e$, we estimate the empirical probability \[ \text{FN}=P(Q_b)=\mathbb{P}(X_e>B(Q_b)), \] 
        where $\text{FN}$ stands for false negative, to clarify the link with the manuscript.
    \end{itemize} 
    Finally, for each $N_1\in\{25,100,500\}$, we plot the estimated values of $P(Q_b)$ as a function of $Q_b$. We also report the variability across the $10\,000$ repetitions using shaded bands corresponding 
    to $\pm\sigma$, $\pm2\sigma$, and $\pm3\sigma$, where $\sigma$ denotes the standard deviation of $P(Q_b)$ over the $10\,000$ repetitions. 
    The results are shown in \Cref{fig:comparison_synthetic_expo,fig:comparison_synthetic_pareto}. The empirical densities of $\mathrm{Exp}(5)$ and $G_{\xi,1}$ are displayed in \Cref{fig:density_illu}.

     We also repeated the experiment with $X_e\sim\mathrm{Unif}([0,10])$ and obtained $FN=0$ in all repetitions, since $B(Q_b)>10$ for every estimated threshold.
      Indeed, for bounded distributions with tail index $\xi<0$ and $|\xi|<0.5$, the correction provided by \eqref{eq:practical_Delta_b} tends to be overly conservative. 
      For this reason, in our suggested algorithm, we instead rely on the endpoint estimator \eqref{eq:end_point_estimate} after applying the statistical test of \cite{charras2013extreme}[Theorem~6].
\begin{figure*}[t] 
\centering
\begin{subfigure}[t]{0.47\textwidth} 
\centering 
\includegraphics[width=\linewidth]{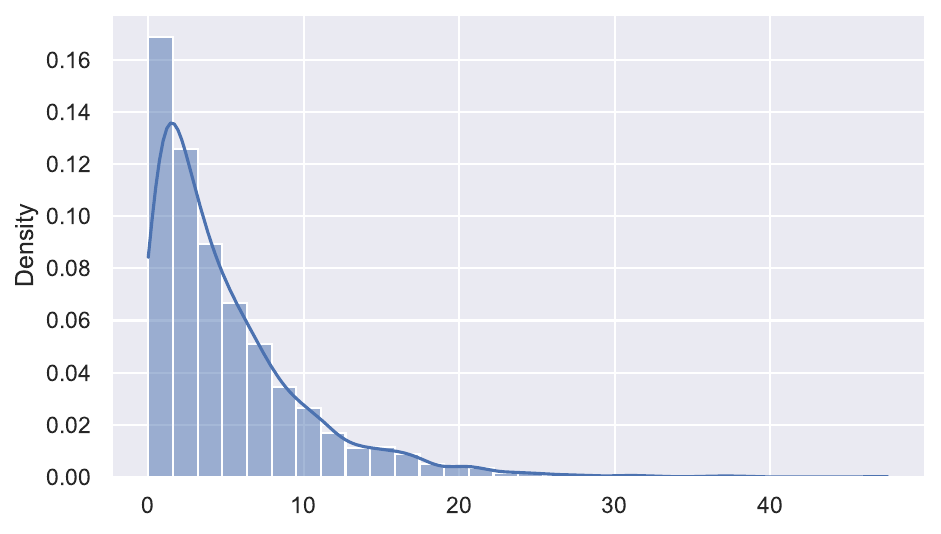}
\caption{$X_e\sim \text{Exp}(5)$} 
\label{fig:density_expo} 
\end{subfigure} 
\hfill 
\begin{subfigure}[t]{0.47\textwidth}
\centering
\includegraphics[width=\linewidth]{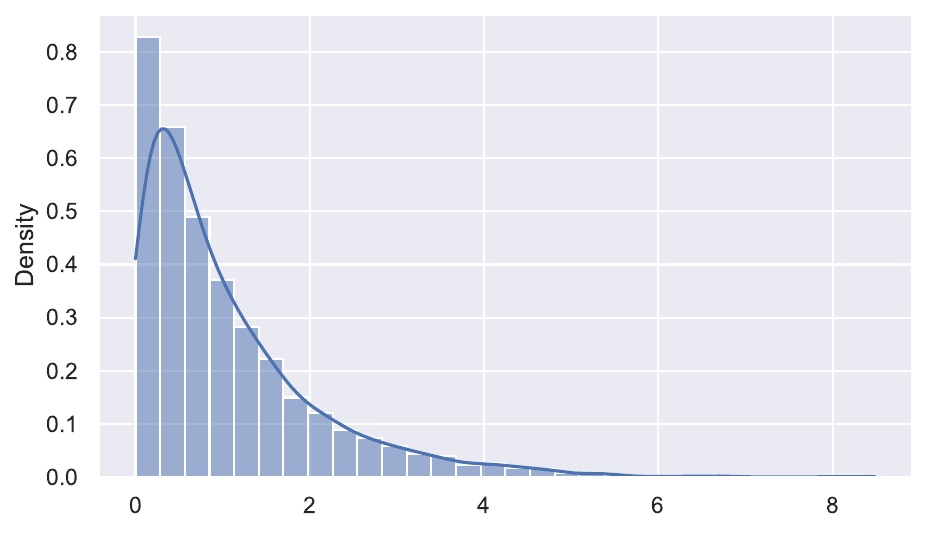} 
\caption{$X_e \sim G_{\xi,1}$ with $\xi=-10^{-5}$} 
\label{fig:density_gpd} 
\end{subfigure} 
\hfill 

\caption{ Illustration of the distribution used in the synthetic experiments using their histograms on 3000 samples. }
\label{fig:density_illu} 
\end{figure*}

\begin{figure*}[t] 
\centering
\begin{subfigure}[t]{0.32\textwidth} 
\centering 
\includegraphics[width=\linewidth]{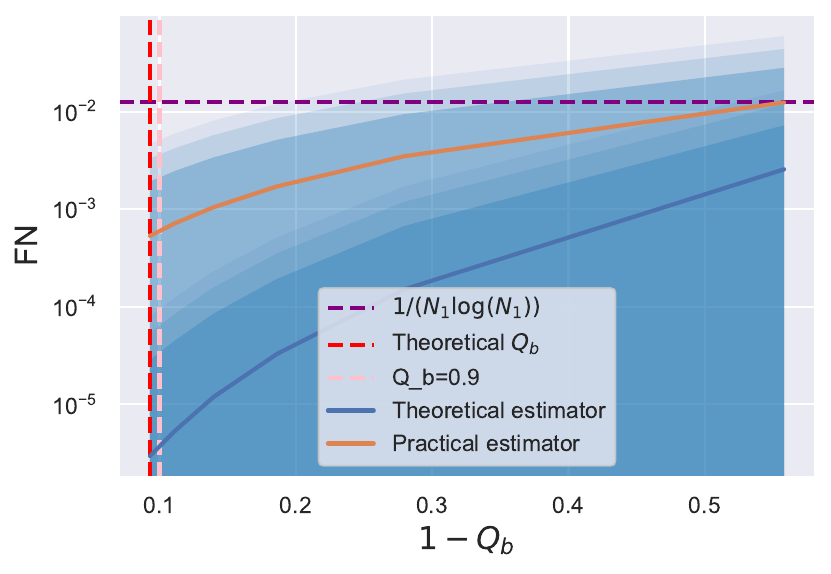}
\caption{$N_1=25$} 
\label{fig:exp_synthetic_expoN_25} 
\end{subfigure} 
\hfill 
\begin{subfigure}[t]{0.32\textwidth}
\centering
\includegraphics[width=\linewidth]{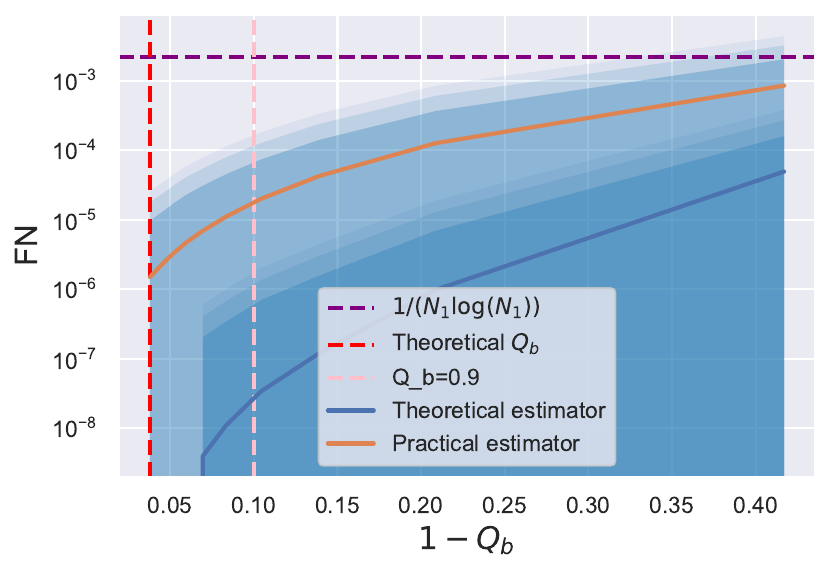} 
\caption{$N_1=100$} 
\label{fig:exp_synthetic_expoN_100} 
\end{subfigure} 
\hfill 
\begin{subfigure}[t]{0.32\textwidth}
\centering 
\includegraphics[width=\linewidth]{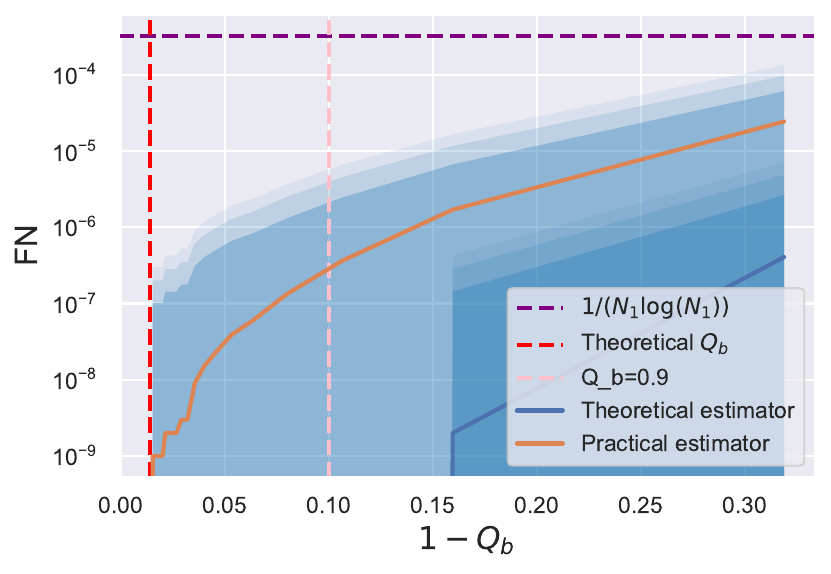}
\caption{$N_1=500$} 
\label{fig:exp_synthetic_expoN_500}
\end{subfigure} 
\caption{ Comparison of the false negative rate ($\text{FN}$) when $X_e \sim \text{Exp}(5)$. Notice that the practical estimator follows the same false negative constraint $1/(N_1\log(N_1)) $ as the theoretical one as long as $Q_b\geq 0.9$ for any $N_1\in \{25,100,500\}$.
We report the variability across $10\,000$ repetitions using shaded bands corresponding 
    to $\pm\sigma$, $\pm2\sigma$, and $\pm3\sigma$, where $\sigma$ denotes the standard deviation of $\text{FN}$ over the $10\,000$ repetitions.  }
\label{fig:comparison_synthetic_expo} 
\end{figure*}

\begin{figure*}[t] 
\centering
\begin{subfigure}[t]{0.32\textwidth} 
\centering 
\includegraphics[width=\linewidth]{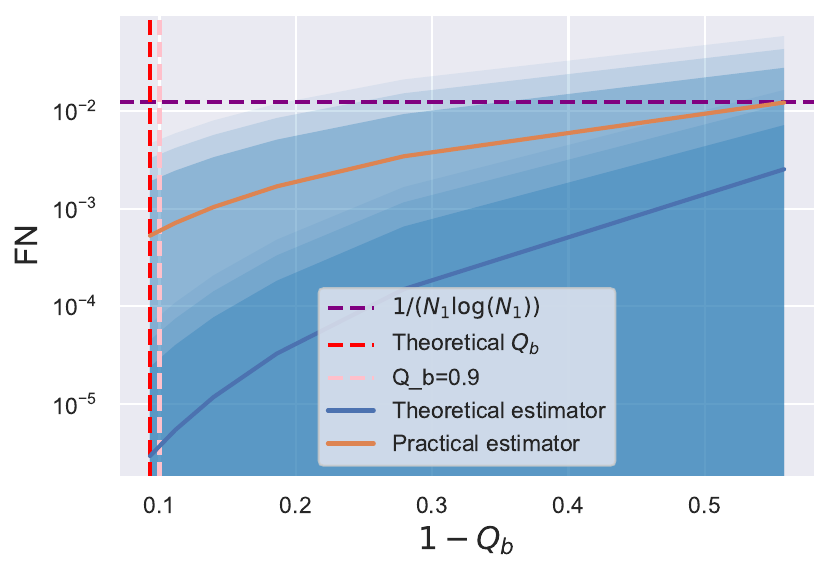}
\caption{$N_1=25$} 
\label{fig:exp_synthetic_paretoN_25} 
\end{subfigure} 
\hfill 
\begin{subfigure}[t]{0.32\textwidth}
\centering
\includegraphics[width=\linewidth]{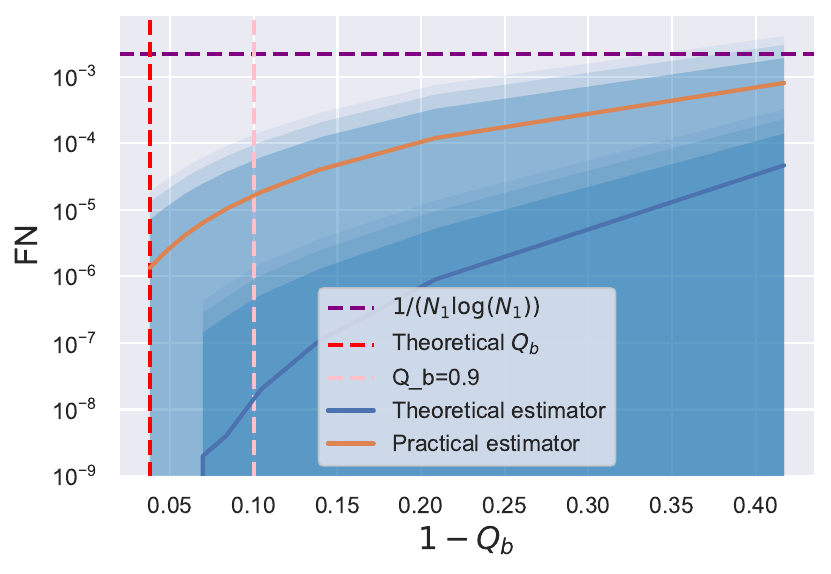} 
\caption{$N_1=100$} 
\label{fig:exp_synthetic_paretoN_100} 
\end{subfigure} 
\hfill 
\begin{subfigure}[t]{0.32\textwidth}
\centering 
\includegraphics[width=\linewidth]{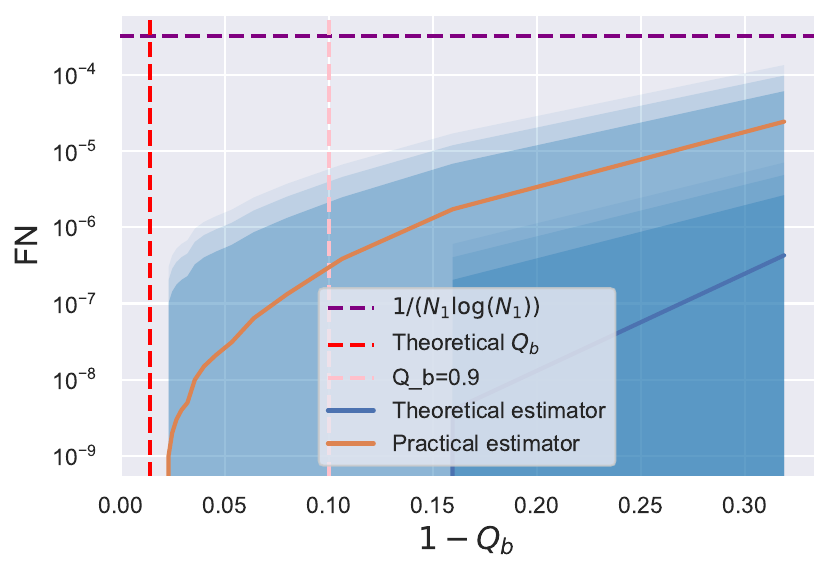}
\caption{$N_1=500$} 
\label{fig:exp_synthetic_paretoN_500}
\end{subfigure} 
\caption{ Comparison of the false negative rate ($\text{FN}$) when $X_e \sim G_{\xi,1}$ with $\xi=-10^{-5}$. Notice that the practical estimator follows the same false negative constraint $1/(N_1\log(N_1)) $ as the theoretical one as long as $Q_b\geq 0.9$ for any $N_1\in \{25,100,500\}$.
We report the variability across $10\,000$ repetitions using shaded bands corresponding 
    to $\pm\sigma$, $\pm2\sigma$, and $\pm3\sigma$, where $\sigma$ denotes the standard deviation of $\text{FN}$ over the $10\,000$ repetitions. }
\label{fig:comparison_synthetic_pareto} 
\end{figure*}

For both distribution the practical estimator follows the same false negative constraint $1/(N_1\log(N_1)) $ as the theoretical one as long as $Q_b\geq 0.9$ for any $N_1\in \{25,100,500\}$.

\newpage

\subsection{Datasets}
\label{appendix:datasets}

The \textbf{ScreenX} dataset considers a binary outcome $Y$ indicating the presence of aggressive cancer.
 The classifier's output is used to determine whether a biopsy should be performed based on preliminary diagnostic features $X$ related to a blood sample.
  The objective is to avoid unnecessary biopsies, as they constitute an invasive and burdensome procedure for patients,
   while simultaneously maintaining a near-zero false negative rate.
    Indeed, failing to detect an aggressive cancer may delay treatment initiation and could ultimately lead to preventable mortality.

The \textbf{Corruption} dataset \cite{vasconcelos2021dataset} was constructed by integrating data from eight information systems of the Brazilian federal government and the Federal District.
 To comply with GDPR requirements, all personally identifiable information relating to civil servants and military personnel was removed, resulting in a fully anonymized dataset.
  The available features include information on political appointments, salaries, the number of positions held, and indicators of financial investments for each employee.
  The dataset is is available at \href{https://data.mendeley.com/ datasets/crpdknzswh/2}{this link}.

The \textbf{Credit} dataset \cite{dal2017credit}, available on Kaggle at \href{https://www.kaggle.com/datasets/mlg-ulb/creditcardfraud}{Kaggle}, corresponds to the "Credit Card Fraud Detection" task, whose objective is to identify fraudulent transactions from transaction-level features collected over a two-day period. For confidentiality reasons, all variables are numerical and have been transformed using PCA, except for the \texttt{Time} and \texttt{Amount} features, which are provided in their original form.

The \textbf{Breast Cancer Wisconsin} (Diagnostic) dataset \cite{breastcancerwisconsin17} is a widely used benchmark for binary classification in machine learning and statistics.
 It consists of $N=569$ breast mass samples collected through fine needle aspiration (FNA), each labeled as either benign or malignant.
  The objective is to predict the diagnostic outcome from a set of quantitative descriptors extracted from digitized images of cell nuclei.
   The dataset contains 30 continuous features describing the morphology of cell nuclei.
    These features are derived from ten fundamental characteristics, namely radius, texture, perimeter, area, smoothness, compactness, concavity, concave points, symmetry, and fractal dimension.
     For each characteristic, the mean, standard error, and worst observed value are recorded, resulting in a total of 30 predictors. Among the 569 observations, 212 correspond to malignant tumors and 357 to benign tumors.
      Owing to its moderate size, low dimensionality, and strong predictive signal, the dataset has become a standard benchmark for evaluating classification algorithms. Many classical and modern methods achieve classification accuracies exceeding $95\%$ and area under the ROC curve (AUC) values above $0.98$.
\newpage
      \subsection{Details for competing methods}
\label{sec:parameter}

Training a standard SVC with an RBF kernel is computationally challenging on large datasets ($N\approx 300{,}000$). To obtain a scalable approximation, we first mapped the input features into a 200-dimensional randomized feature space using the Random Fourier Features implementation \texttt{RBFSampler(gamma="scale", n\_components=200, random\_state=seed)}.
 A linear SVC was then trained on these transformed features, yielding an efficient approximation of an RBF-kernel SVC while substantially reducing computational costs.

We set the maximum number of L-BFGS iterations for logistic regression to 1000 rather than the default value of 100 in order to ensure convergence to an optimal solution.

Note that, for the \textit{MulLog} and \textit{Log} methods, the logistic regression model is trained on the entire dataset.
 Similarly, the threshold on the logistic score coordinate is selected using the full dataset.
 Consequently, this threshold is expected to be slightly less conservative, as the logistic regression model is not estimated independently of the validation data.
  However, the test set, on which performance is ultimately evaluated, remains completely independent of this process, ensuring a fair comparison with the other methods.
   We chose not to introduce an additional data split in order to avoid further reducing the effective sample size.
   Moreover, standard logistic regression is known to be relatively robust to outliers. As a result, learning the decision threshold using EVT primarily exploits information contained in the tails of the distribution, which is largely complementary to the information driving the logistic regression model.
    In contrast, the \textit{Mul} method is not affected by this issue, as it does not involve learning a regression score.

We also investigated using the full dataset both for classifier training and for threshold estimation in the RC methods. However, this had little impact on the results. Therefore, for consistency with the existing literature, we retained the recommended procedure of separating the training and validation sets.

We were unable to perform a similar analysis for the NP methods due to limitations of the available API in \textit{nproc}, which does not provide a mechanism for separating the training and validation data during threshold estimation.
\newpage
\subsection{Complete results}
\label{appendix:table}
\begin{table}[!t] 
        \centering 
        \caption{Summary of Benchmark Results, ranking is performed by $TN/\max(FN,1)$ and not $ TN/\max(FN,0.1)$.} 
        \label{tab:rank_summary_ratio_1}
         \small
          \begin{tabular}{lccccc}
             \toprule Rank & ScreenX & Corruption & Credit & Mean Rank \\
              \midrule 1 & Easy Ensemble&Conservative Ensemble & Conservative Ensemble  & MulLog-Q=0.1 (Ours)  \\
               2 & CS-NTA-XGBoost&CS-NTA-Logistic & CS-NTA-Logistic & CS-NTA-XGBoost \\ 
               3 & CS-XGBoost&Balanced Random Forest& NP-Logistic & Easy Ensemble \\
                4 & MulLog-Q=0.1 (Ours)&Easy Ensemble & CS-NTA-XGBoost  & NP-Logistic \\
                 5 &Mul-Q=0.1 (Ours)&CS-SVC-RBF& MulLog-Q=0.1 (Ours)& CS-NTA-Logistic \\
                  \bottomrule 
                \end{tabular}
                \vspace{0.5em} 

\end{table}

\begin{table}[!h]
\caption{ScreenX benchmark results. We report the mean $\pm$ one standard deviation, computed over 10 repetitions with different random seeds, where each repetition corresponds to the mean performance across a 5-fold cross-validation.} \label{screenx_complete}
\begin{center}
\begin{tabular}{lllll}
\toprule
Dataset & \multicolumn{4}{c}{ScreenX} \\
 & FN & TN & TN/max(FN,0.1) & TN/max(FN,1) \\
Method &  &  &  &  \\
\midrule
Mul-Q=0.95 (Ours) & 0.3 $\pm$ 0.2 & 5.5 $\pm$ 0.4 & 26.9 $\pm$ 16.1 & 5.5 $\pm$ 0.4 \\
Mul-Q=0.9 (Ours) & 0.4 $\pm$ 0.2 & 6.8 $\pm$ 0.5 & 24.7 $\pm$ 17.4 & 6.8 $\pm$ 0.5 \\
Conservative Ensemble & 0.2 $\pm$ 0.1 & 3.5 $\pm$ 0.6 & 20.9 $\pm$ 6.6 & 3.5 $\pm$ 0.6 \\
MulLog-Q=0.95 (Ours) & 0.5 $\pm$ 0.2 & 5.6 $\pm$ 0.3 & 14.6 $\pm$ 7.4 & 5.6 $\pm$ 0.3 \\
CS-Logistic & 0.3 $\pm$ 0.2 & 4.0 $\pm$ 0.4 & 14.3 $\pm$ 6.4 & 4.0 $\pm$ 0.4 \\
MulLog-Q=0.9 (Ours) & 0.6 $\pm$ 0.2 & 7.0 $\pm$ 0.6 & 13.8 $\pm$ 7.4 & 7.0 $\pm$ 0.6 \\
NP-Random Forest & 0.3 $\pm$ 0.2 & 3.0 $\pm$ 1.3 & 13.3 $\pm$ 8.9 & 3.0 $\pm$ 1.3 \\
Easy Ensemble & 1.1 $\pm$ 0.2 & 12.9 $\pm$ 0.6 & 11.7 $\pm$ 2.4 & 11.1 $\pm$ 1.4 \\
CS-NTA-XGBoost & 1.2 $\pm$ 0.4 & 12.3 $\pm$ 1.2 & 11.2 $\pm$ 5.1 & 9.9 $\pm$ 2.5 \\
CS-NTA-Logistic & 0.5 $\pm$ 0.2 & 4.5 $\pm$ 1.1 & 10.9 $\pm$ 4.5 & 4.5 $\pm$ 1.1 \\
Log-Q=0.1 (Ours) & 0.5 $\pm$ 0.2 & 4.6 $\pm$ 0.8 & 10.5 $\pm$ 1.8 & 4.6 $\pm$ 0.8 \\
NP-Logistic & 0.7 $\pm$ 0.4 & 6.2 $\pm$ 2.6 & 10.2 $\pm$ 4.3 & 5.8 $\pm$ 2.3 \\
CS-XGBoost & 1.6 $\pm$ 0.2 & 14.4 $\pm$ 0.5 & 9.1 $\pm$ 1.3 & 9.1 $\pm$ 1.3 \\
MulLog-Q=0.1 (Ours) & 1.3 $\pm$ 0.3 & 11.5 $\pm$ 0.4 & 9.0 $\pm$ 1.6 & 9.0 $\pm$ 1.6 \\
Mul-Q=0.1 (Ours) & 1.3 $\pm$ 0.2 & 10.7 $\pm$ 0.4 & 8.6 $\pm$ 1.2 & 8.6 $\pm$ 1.2 \\
Balanced Random Forest & 1.8 $\pm$ 0.3 & 14.7 $\pm$ 0.4 & 8.2 $\pm$ 1.4 & 8.2 $\pm$ 1.4 \\
Naive & 1.6 $\pm$ 0.2 & 12.6 $\pm$ 0.5 & 8.1 $\pm$ 1.0 & 8.1 $\pm$ 1.0 \\
CS-SVC-RBF & 1.6 $\pm$ 0.3 & 11.7 $\pm$ 0.7 & 7.8 $\pm$ 1.5 & 7.8 $\pm$ 1.5 \\
Balanced Bagging Decision & 2.2 $\pm$ 0.5 & 15.0 $\pm$ 0.7 & 7.0 $\pm$ 1.2 & 7.0 $\pm$ 1.2 \\
RC-Logistic & 2.1 $\pm$ 1.0 & 12.9 $\pm$ 3.6 & 6.9 $\pm$ 1.9 & 6.5 $\pm$ 1.7 \\
CS-NTA-SVC-RBF & 2.1 $\pm$ 0.8 & 12.1 $\pm$ 2.2 & 6.2 $\pm$ 1.6 & 6.2 $\pm$ 1.6 \\
RC-XGBoost & 2.6 $\pm$ 0.4 & 15.3 $\pm$ 1.2 & 5.9 $\pm$ 0.7 & 5.9 $\pm$ 0.7 \\
Isolation Forest & 2.4 $\pm$ 0.7 & 12.4 $\pm$ 1.8 & 5.6 $\pm$ 1.4 & 5.6 $\pm$ 1.4 \\
Log-Q=0.9 (Ours) & 0.3 $\pm$ 0.1 & 1.2 $\pm$ 0.8 & 4.8 $\pm$ 1.3 & 1.2 $\pm$ 0.8 \\
Log-Q=0.95 (Ours) & 0.3 $\pm$ 0.1 & 0.8 $\pm$ 0.7 & 2.8 $\pm$ 1.3 & 0.8 $\pm$ 0.7 \\
\bottomrule
\end{tabular}
\end{center}
\end{table}
\newpage
\begin{table}[!h]
\caption{Corruption benchmark results. We report the mean $\pm$ one standard deviation, computed over 10 repetitions with different random seeds, where each repetition corresponds to the mean performance across a 5-fold cross-validation.} \label{corruption_complete}
\begin{center}
\begin{tabular}{lllll}
\toprule
Dataset & \multicolumn{4}{c}{Corruption} \\
 & FN & TN & TN/max(FN,0.1) & TN/max(FN,1) \\
Method &  &  &  &  \\
\midrule
MulLog-Q=0.9 (Ours) & 0.0 $\pm$ 0.0 & 650.4 $\pm$ 14.4 & 6503.8 $\pm$ 144.0 & 650.4 $\pm$ 14.4 \\
MulLog-Q=0.95 (Ours) & 0.0 $\pm$ 0.0 & 588.0 $\pm$ 12.5 & 5880.0 $\pm$ 125.2 & 588.0 $\pm$ 12.5 \\
MulLog-Q=0.1 (Ours) & 0.3 $\pm$ 0.1 & 1208.9 $\pm$ 40.3 & 4848.9 $\pm$ 1605.5 & 1208.9 $\pm$ 40.3 \\
Mul-Q=0.9 (Ours) & 0.0 $\pm$ 0.0 & 438.2 $\pm$ 0.0 & 4382.0 $\pm$ 0.0 & 438.2 $\pm$ 0.0 \\
Mul-Q=0.1 (Ours) & 0.0 $\pm$ 0.0 & 438.2 $\pm$ 0.0 & 4382.0 $\pm$ 0.0 & 438.2 $\pm$ 0.0 \\
Mul-Q=0.95 (Ours) & 0.0 $\pm$ 0.0 & 438.2 $\pm$ 0.0 & 4382.0 $\pm$ 0.0 & 438.2 $\pm$ 0.0 \\
Log-Q=0.1 (Ours) & 0.3 $\pm$ 0.1 & 770.7 $\pm$ 40.3 & 3096.1 $\pm$ 1042.4 & 770.7 $\pm$ 40.3 \\
Conservative Ensemble & 2.5 $\pm$ 1.0 & 5343.1 $\pm$ 1439.6 & 2324.5 $\pm$ 837.1 & 2324.5 $\pm$ 837.1 \\
Log-Q=0.9 (Ours) & 0.0 $\pm$ 0.0 & 212.2 $\pm$ 14.4 & 2121.8 $\pm$ 144.0 & 212.2 $\pm$ 14.4 \\
NP-Logistic & 0.5 $\pm$ 0.3 & 782.7 $\pm$ 227.8 & 1918.2 $\pm$ 1446.4 & 782.7 $\pm$ 227.8 \\
CS-NTA-Logistic & 6.0 $\pm$ 2.6 & 9374.2 $\pm$ 2782.3 & 1673.9 $\pm$ 376.2 & 1673.9 $\pm$ 376.2 \\
Balanced Random Forest & 26.3 $\pm$ 0.9 & 41566.7 $\pm$ 237.9 & 1580.9 $\pm$ 53.3 & 1580.9 $\pm$ 53.3 \\
Log-Q=0.95 (Ours) & 0.0 $\pm$ 0.0 & 149.8 $\pm$ 12.5 & 1498.0 $\pm$ 125.2 & 149.8 $\pm$ 12.5 \\
Easy Ensemble & 31.0 $\pm$ 1.5 & 41765.2 $\pm$ 202.7 & 1351.6 $\pm$ 60.9 & 1351.6 $\pm$ 60.9 \\
CS-SVC-RBF & 31.8 $\pm$ 0.8 & 41469.2 $\pm$ 465.7 & 1304.8 $\pm$ 36.9 & 1304.8 $\pm$ 36.9 \\
CS-Logistic & 23.2 $\pm$ 0.6 & 29918.4 $\pm$ 349.8 & 1291.4 $\pm$ 32.7 & 1291.4 $\pm$ 32.7 \\
Balanced Bagging Decision & 40.6 $\pm$ 1.3 & 50711.5 $\pm$ 176.0 & 1249.6 $\pm$ 40.9 & 1249.6 $\pm$ 40.9 \\
CS-NTA-SVC-RBF & 17.2 $\pm$ 6.0 & 21190.4 $\pm$ 7784.2 & 1227.5 $\pm$ 107.8 & 1227.5 $\pm$ 107.8 \\
Naive & 1.6 $\pm$ 0.5 & 1743.6 $\pm$ 152.8 & 1220.9 $\pm$ 472.7 & 1124.4 $\pm$ 247.0 \\
RC-XGBoost & 28.9 $\pm$ 4.7 & 34627.0 $\pm$ 4942.9 & 1202.3 $\pm$ 80.0 & 1202.3 $\pm$ 80.0 \\
CS-NTA-XGBoost & 24.1 $\pm$ 5.6 & 28376.7 $\pm$ 5600.0 & 1191.2 $\pm$ 103.5 & 1191.2 $\pm$ 103.5 \\
CS-XGBoost & 57.9 $\pm$ 1.2 & 58134.2 $\pm$ 60.8 & 1005.1 $\pm$ 20.8 & 1005.1 $\pm$ 20.8 \\
NP-Random Forest & 69.5 $\pm$ 1.4 & 60487.0 $\pm$ 9.9 & 870.1 $\pm$ 17.9 & 870.1 $\pm$ 17.9 \\
RC-Logistic & 84.0 $\pm$ 0.1 & 60478.0 $\pm$ 10.5 & 719.6 $\pm$ 1.0 & 719.6 $\pm$ 1.0 \\
Isolation Forest & 15.3 $\pm$ 12.9 & 10094.4 $\pm$ 9172.9 & 602.6 $\pm$ 93.5 & 602.6 $\pm$ 93.5 \\
\bottomrule
\end{tabular}

\end{center}
\end{table}
\newpage
\begin{table}[!h]
\caption{Credit benchmark results. We report the mean $\pm$ one standard deviation, computed over 10 repetitions with different random seeds, where each repetition corresponds to the mean performance across a 5-fold cross-validation. } \label{credit_complete}
\begin{center}
\begin{tabular}{lllll}
\toprule
Dataset & \multicolumn{4}{c}{Credit} \\
 & FN & TN & TN/max(FN,0.1) & TN/max(FN,1) \\
Method &  &  &  &  \\
\midrule
Mul-Q=0.1 (Ours) & 0.2 $\pm$ 0.1 & 6732.6 $\pm$ 872.8 & 38852.9 $\pm$ 16596.4 & 6732.6 $\pm$ 872.8 \\
Mul-Q=0.9 (Ours) & 0.1 $\pm$ 0.2 & 4377.2 $\pm$ 726.5 & 37114.9 $\pm$ 15397.2 & 4377.2 $\pm$ 726.5 \\
Mul-Q=0.95 (Ours) & 0.1 $\pm$ 0.2 & 4102.0 $\pm$ 882.4 & 34913.9 $\pm$ 15585.8 & 4102.0 $\pm$ 882.4 \\
MulLog-Q=0.9 (Ours) & 0.3 $\pm$ 0.1 & 7240.4 $\pm$ 763.7 & 30633.0 $\pm$ 11490.8 & 7240.4 $\pm$ 763.7 \\
MulLog-Q=0.95 (Ours) & 0.3 $\pm$ 0.1 & 7008.3 $\pm$ 902.8 & 29752.3 $\pm$ 11568.2 & 7008.3 $\pm$ 902.8 \\
MulLog-Q=0.1 (Ours) & 0.4 $\pm$ 0.1 & 9765.7 $\pm$ 834.1 & 29693.5 $\pm$ 12457.9 & 9765.7 $\pm$ 834.1 \\
Log-Q=0.1 (Ours) & 0.2 $\pm$ 0.1 & 4919.0 $\pm$ 557.0 & 25358.5 $\pm$ 7005.5 & 4919.0 $\pm$ 557.0 \\
NP-Logistic & 0.7 $\pm$ 0.4 & 12196.0 $\pm$ 2638.3 & 22249.4 $\pm$ 11459.9 & 11550.8 $\pm$ 1985.2 \\
Log-Q=0.9 (Ours) & 0.2 $\pm$ 0.1 & 4084.5 $\pm$ 572.2 & 20994.3 $\pm$ 5637.0 & 4084.5 $\pm$ 572.2 \\
Log-Q=0.95 (Ours) & 0.2 $\pm$ 0.1 & 4037.6 $\pm$ 580.3 & 20752.3 $\pm$ 5598.3 & 4037.6 $\pm$ 580.3 \\
Conservative Ensemble & 1.7 $\pm$ 0.4 & 33094.6 $\pm$ 3202.6 & 20076.2 $\pm$ 3801.2 & 20076.2 $\pm$ 3801.2 \\
CS-NTA-Logistic & 2.8 $\pm$ 0.9 & 35531.6 $\pm$ 3589.3 & 13576.9 $\pm$ 3247.8 & 13576.9 $\pm$ 3247.8 \\
CS-NTA-XGBoost & 4.9 $\pm$ 0.6 & 49860.0 $\pm$ 198.7 & 10366.3 $\pm$ 1347.4 & 10366.3 $\pm$ 1347.4 \\
CS-NTA-SVC-RBF & 2.8 $\pm$ 0.9 & 25871.3 $\pm$ 5003.9 & 9720.9 $\pm$ 2215.5 & 9720.9 $\pm$ 2215.5 \\
CS-Logistic & 7.9 $\pm$ 0.3 & 54572.9 $\pm$ 13.0 & 6950.4 $\pm$ 238.6 & 6950.4 $\pm$ 238.6 \\
Naive & 6.7 $\pm$ 0.3 & 43402.1 $\pm$ 197.4 & 6470.2 $\pm$ 289.3 & 6470.2 $\pm$ 289.3 \\
Easy Ensemble & 8.6 $\pm$ 0.4 & 54992.1 $\pm$ 55.8 & 6409.1 $\pm$ 325.3 & 6409.1 $\pm$ 325.3 \\
Balanced Random Forest & 9.5 $\pm$ 0.5 & 55529.0 $\pm$ 54.6 & 5873.5 $\pm$ 332.2 & 5873.5 $\pm$ 332.2 \\
Balanced Bagging Decision & 11.0 $\pm$ 0.6 & 55731.8 $\pm$ 56.3 & 5060.6 $\pm$ 256.7 & 5060.6 $\pm$ 256.7 \\
CS-SVC-RBF & 13.2 $\pm$ 1.2 & 52652.4 $\pm$ 245.2 & 4023.1 $\pm$ 357.5 & 4023.1 $\pm$ 357.5 \\
NP-Random Forest & 17.6 $\pm$ 2.4 & 56713.9 $\pm$ 161.9 & 3284.0 $\pm$ 474.9 & 3284.0 $\pm$ 474.9 \\
CS-XGBoost & 17.8 $\pm$ 0.4 & 56855.2 $\pm$ 0.5 & 3199.0 $\pm$ 68.8 & 3199.0 $\pm$ 68.8 \\
RC-XGBoost & 21.8 $\pm$ 1.0 & 56858.0 $\pm$ 2.4 & 2615.6 $\pm$ 122.3 & 2615.6 $\pm$ 122.3 \\
RC-Logistic & 23.2 $\pm$ 3.3 & 56829.8 $\pm$ 15.4 & 2490.5 $\pm$ 320.1 & 2490.5 $\pm$ 320.1 \\
Isolation Forest & 63.5 $\pm$ 4.7 & 47993.2 $\pm$ 3420.8 & 756.9 $\pm$ 27.1 & 756.9 $\pm$ 27.1 \\
\bottomrule
\end{tabular}

\end{center}
\end{table}

\newpage
\begin{table}[!h]
\caption{Breast Cancer benchmark results. We report the mean $\pm$ one standard deviation, computed over 10 repetitions with different random seeds, where each repetition corresponds to the mean performance across a 5-fold cross-validation.} \label{Breast_complete}
\begin{center}
\begin{tabular}{lllll}
\toprule
Dataset & \multicolumn{4}{c}{Breast Cancer} \\
 & FN & TN & TN/max(FN,0.1) & TN/max(FN,1) \\
Method &  &  &  &  \\
\midrule
CS-Logistic & 0.0 $\pm$ 0.0 & 32.4 $\pm$ 0.2 & 323.8 $\pm$ 1.8 & 32.4 $\pm$ 0.2 \\
CS-NTA-Logistic & 0.0 $\pm$ 0.0 & 31.6 $\pm$ 0.6 & 316.4 $\pm$ 6.2 & 31.6 $\pm$ 0.6 \\
Conservative Ensemble & 0.0 $\pm$ 0.0 & 31.0 $\pm$ 0.9 & 309.6 $\pm$ 9.2 & 31.0 $\pm$ 0.9 \\
Log-Q=0.9 (Ours) & 0.1 $\pm$ 0.3 & 37.7 $\pm$ 0.6 & 296.0 $\pm$ 132.8 & 37.7 $\pm$ 0.6 \\
MulLog-Q=0.9 (Ours) & 0.1 $\pm$ 0.3 & 37.5 $\pm$ 0.6 & 294.4 $\pm$ 132.1 & 37.5 $\pm$ 0.6 \\
Log-Q=0.95 (Ours) & 0.1 $\pm$ 0.3 & 36.7 $\pm$ 0.7 & 288.8 $\pm$ 129.6 & 36.7 $\pm$ 0.7 \\
MulLog-Q=0.95 (Ours) & 0.1 $\pm$ 0.3 & 36.5 $\pm$ 0.7 & 287.2 $\pm$ 128.9 & 36.5 $\pm$ 0.7 \\
CS-NTA-XGBoost & 0.2 $\pm$ 0.2 & 36.5 $\pm$ 0.9 & 205.1 $\pm$ 114.9 & 36.5 $\pm$ 0.9 \\
CS-SVC-RBF & 0.2 $\pm$ 0.2 & 34.3 $\pm$ 0.8 & 170.7 $\pm$ 95.8 & 34.3 $\pm$ 0.8 \\
NP-Logistic & 0.4 $\pm$ 0.3 & 38.8 $\pm$ 0.3 & 151.4 $\pm$ 103.1 & 38.8 $\pm$ 0.3 \\
Log-Q=0.1 (Ours) & 0.5 $\pm$ 0.2 & 40.2 $\pm$ 0.3 & 92.3 $\pm$ 45.6 & 40.2 $\pm$ 0.3 \\
MulLog-Q=0.1 (Ours) & 0.5 $\pm$ 0.2 & 40.1 $\pm$ 0.3 & 91.9 $\pm$ 45.3 & 40.1 $\pm$ 0.3 \\
CS-XGBoost & 0.5 $\pm$ 0.2 & 38.2 $\pm$ 0.5 & 82.7 $\pm$ 41.9 & 38.2 $\pm$ 0.5 \\
NP-Random Forest & 0.6 $\pm$ 0.2 & 36.1 $\pm$ 0.7 & 63.2 $\pm$ 20.0 & 36.1 $\pm$ 0.7 \\
RC-Logistic & 1.8 $\pm$ 1.4 & 40.3 $\pm$ 0.8 & 50.8 $\pm$ 59.4 & 26.2 $\pm$ 11.4 \\
CS-NTA-SVC-RBF & 1.4 $\pm$ 0.7 & 37.8 $\pm$ 0.7 & 33.7 $\pm$ 17.2 & 27.9 $\pm$ 8.4 \\
RC-XGBoost & 1.7 $\pm$ 0.7 & 38.9 $\pm$ 0.7 & 26.5 $\pm$ 11.5 & 25.5 $\pm$ 9.8 \\
Easy Ensemble & 2.0 $\pm$ 0.6 & 40.5 $\pm$ 0.3 & 22.0 $\pm$ 6.2 & 22.0 $\pm$ 6.2 \\
Isolation Forest & 2.0 $\pm$ 0.6 & 27.7 $\pm$ 2.4 & 15.2 $\pm$ 4.5 & 15.2 $\pm$ 4.5 \\
Naive & 2.5 $\pm$ 0.4 & 34.5 $\pm$ 0.3 & 14.3 $\pm$ 2.0 & 14.3 $\pm$ 2.0 \\
Balanced Random Forest & 3.1 $\pm$ 0.2 & 40.6 $\pm$ 0.3 & 13.1 $\pm$ 1.0 & 13.1 $\pm$ 1.0 \\
Balanced Bagging Decision & 4.3 $\pm$ 0.7 & 40.5 $\pm$ 0.4 & 9.6 $\pm$ 1.7 & 9.6 $\pm$ 1.7 \\
Mul-Q=0.1 (Ours) & 0.2 $\pm$ 0.0 & 1.1 $\pm$ 0.3 & 5.5 $\pm$ 1.4 & 1.1 $\pm$ 0.3 \\
Mul-Q=0.95 (Ours) & 0.0 $\pm$ 0.0 & 0.0 $\pm$ 0.0 & 0.0 $\pm$ 0.0 & 0.0 $\pm$ 0.0 \\
Mul-Q=0.9 (Ours) & 0.0 $\pm$ 0.0 & 0.0 $\pm$ 0.0 & 0.0 $\pm$ 0.0 & 0.0 $\pm$ 0.0 \\
\bottomrule
\end{tabular}

\end{center}
\end{table}
\newpage
\begin{figure*}[!h] 
    \centering 
    \includegraphics[width=0.8\textwidth]{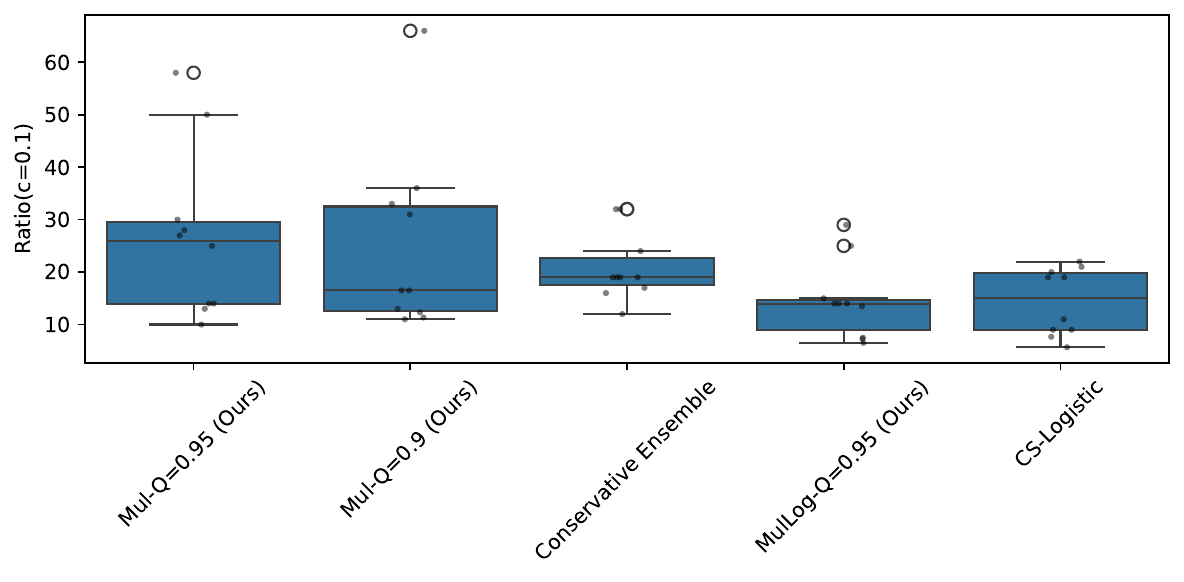}
    \caption{ Boxplot of Ratio($c=0.1$) for the Top 5 Method on the dataset ScreenX. }
      \label{fig:box_screenx} 
    \end{figure*}
\begin{figure*}[!h] 
    \centering 
    \includegraphics[width=0.8\textwidth]{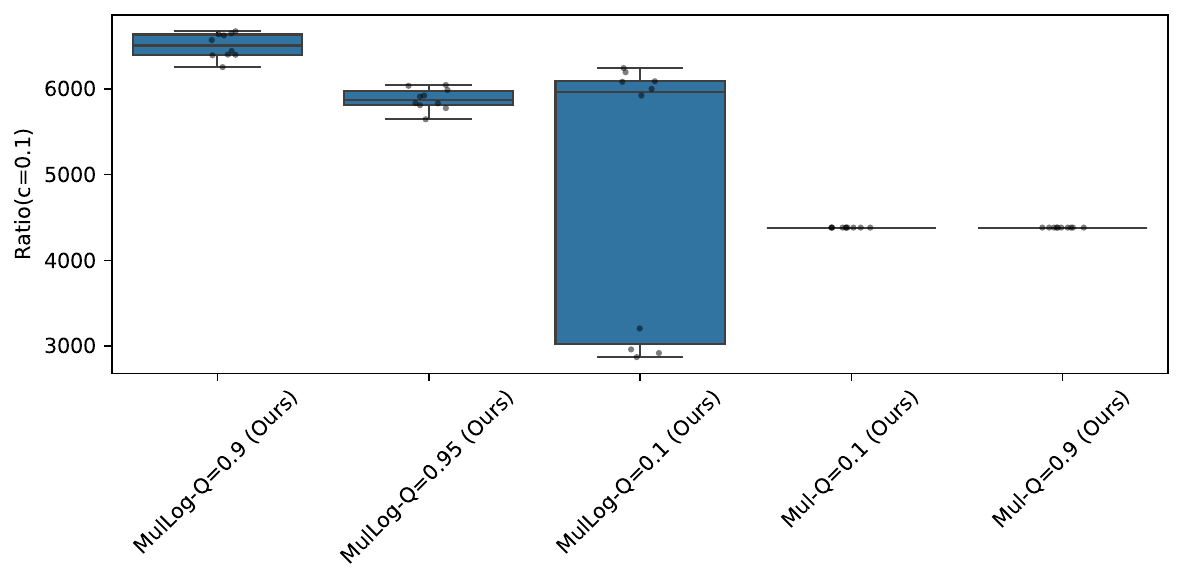}
    \caption{ Boxplot of Ratio($c=0.1$) for the Top 5 Method on the dataset Corruption. The tiny box observed for Mul-Q=0.9 and Mul-Q=0.95 may appear unusual at first glance. However, we have carefully verified the results and confirmed that they are not due to an implementation error. The learning procedure is fully deterministic, with no source of randomness, and the different cross-validation seeds ultimately produce data splits that yield the same average performance across folds. }
      \label{fig:box_corruption} 
    \end{figure*}
\begin{figure*}[!h] 
    \centering 
    \includegraphics[width=0.8\textwidth]{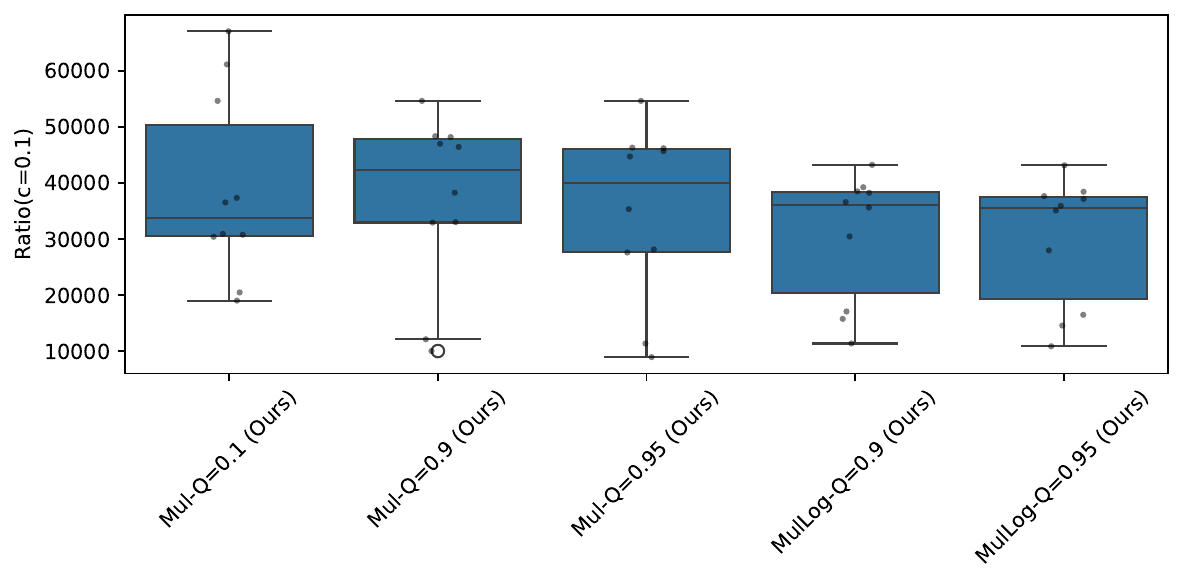}
    \caption{ Boxplot of Ratio($c=0.1$) for the Top 5 Method on the dataset Credit. }
      \label{fig:box_credit} 
\end{figure*}

\begin{figure*}[!h] 
    \centering 
    \includegraphics[width=0.8\textwidth]{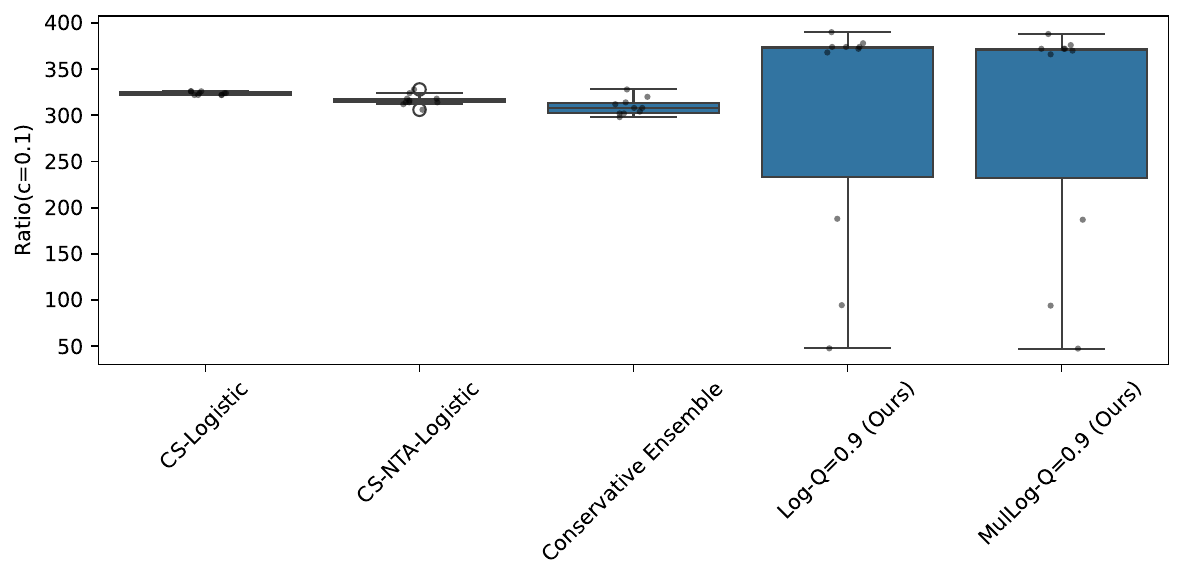}
    \caption{ Boxplot of Ratio($c=0.1$) for the Top 5 Method on the dataset Breast Cancer. }
      \label{fig:box_breastcancer} 
    \end{figure*}

\newpage

\subsection{Feature selection results}

\begin{table}[ht] 
    \centering 
    \caption{Feature importance/coefficient summary across methods. EBC-Min records the frequency where the minimum of the negative sample distribution is significantly lower than the positive sample distribution across 40 train-test splits, same for EBC-Max after applying $x\mapsto -x$ to the data.}
     \begin{tabular}{lcccccc} \hline Feature &  EBC-Max& EBC-Min&XGBoost  & Logistic & Decision Tree & Random Forest \\
         \hline
          X1 & 0.000 & 1.000 & 0.0461 & 0.3135 & 0.0000 & 0.0084 \\
           X2 & 0.000 & 0.025 & 0.0241 & -0.4158 & 0.0000 & 0.0383 \\
            X3 & 0.000 & 0.950 & 0.1074 & 0.4017 & 0.0933 & 0.1066 \\ 
            X4 & 0.000 & 1.000 & 0.1414 & 1.1880 & 0.2244 & 0.1531 \\
             X5 & 0.000 & 0.925 & 0.0468 & 0.6842 & 0.0220 & 0.0615 \\
              X6 & 0.000 & 0.075 & 0.0368 & 0.0228 & 0.0211 & 0.0558 \\ 
              X7 & 0.000 & 0.225 & 0.0579 & -0.1574 & 0.0270 & 0.0685 \\
               X8 & 0.000 & 1.000 & 0.1062 & 0.7986 & 0.1067 & 0.1239 \\
                X9 & 0.000 & 0.000 & 0.0512 & 0.6883 & 0.0038 & 0.0251 \\ 
                X10 & 0.000 & 1.000 & 0.1585 & 0.5007 & 0.2398 & 0.1531 \\ 
                X11 & 0.000 & 0.150 & 0.0487 & 0.5017 & 0.0251 & 0.0720 \\
                 X12 & 1.000 & 0.000 & 0.1749 & -1.3769 & 0.2367 & 0.1338 \\ 
                 \hline 
\end{tabular} \label{tab:feature_importance} \end{table}

 \begin{table}[ht] \centering
             \begin{tabular}{cccccc} 
                \hline Rank & XGBoost & Logistic & Decision Tree & Random Forest & Highest Frequency Summary \\ 
            \hline 1 & X12 & X12 & X10 & X4 & X12
            \\ 2 & X10 & X4 & X12 & X10 & X10\\ 
            3 & X4 & X8 & X4 & X4 & X4\\
             4 & X3 & X5 & X8 & X12 & X3,X5 \\
              5 & X8 & X9 & X3 & X8 & X8\\ 
              \hline \end{tabular} 
              \caption{Top 5 features ranked by absolute importance/coefficient for each method. In comparaison, the features always selected by our features selection mechanism are X1,X4,X8,X10,X12 and X3,X5 are selected more than $90\%$ time of the 40 train-test split.}
              \label{table:top_5}
             \end{table}
\newpage
\subsection{Computation time}
\label{subsection:computation_time}
\begin{figure*}[!h] 
    \centering 
    \includegraphics[width=0.8\textwidth]{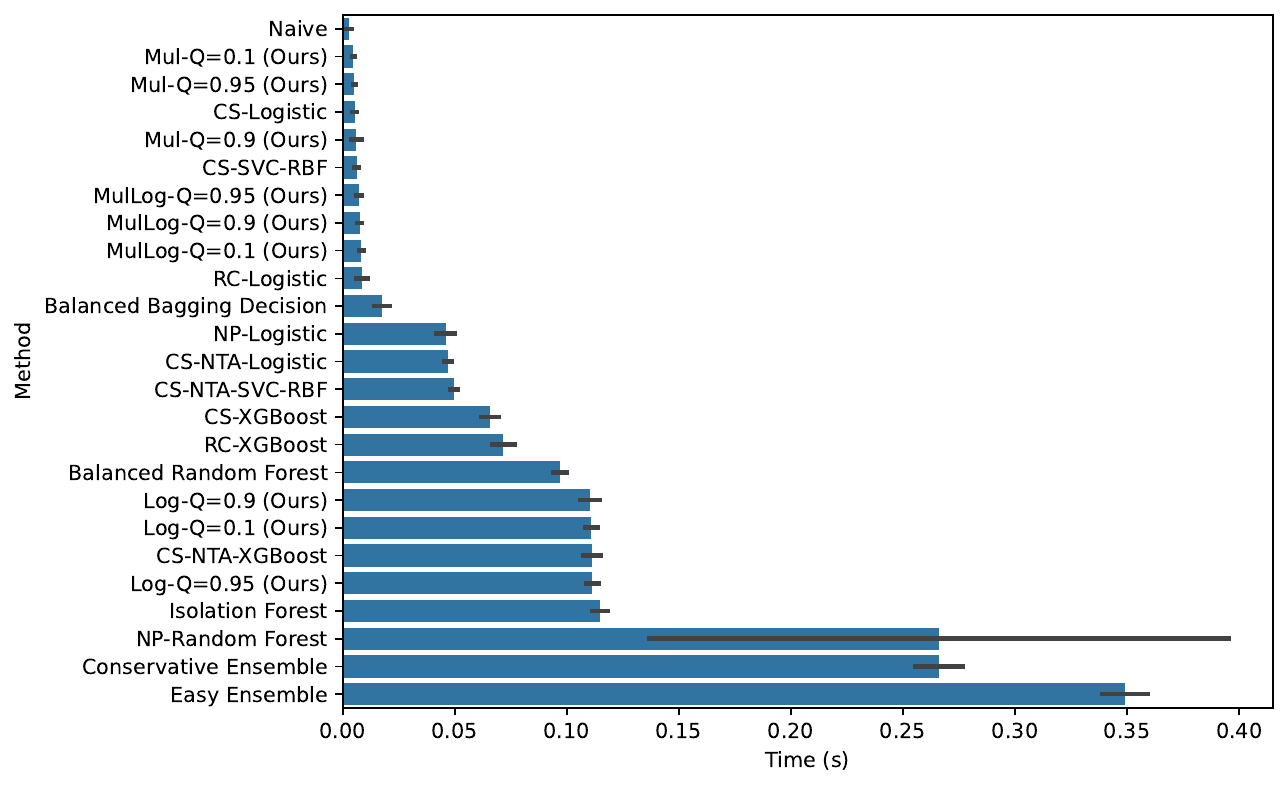}
    \caption{ Barplot of the training time for the different methods on the dataset ScreenX. Runtime measurements were obtained on a CPU-only server with 4\,GB of RAM. }
      \label{fig:time_screenx} 
    \end{figure*}
    \begin{figure*}[!h] 
    \centering 
    \includegraphics[width=0.8\textwidth]{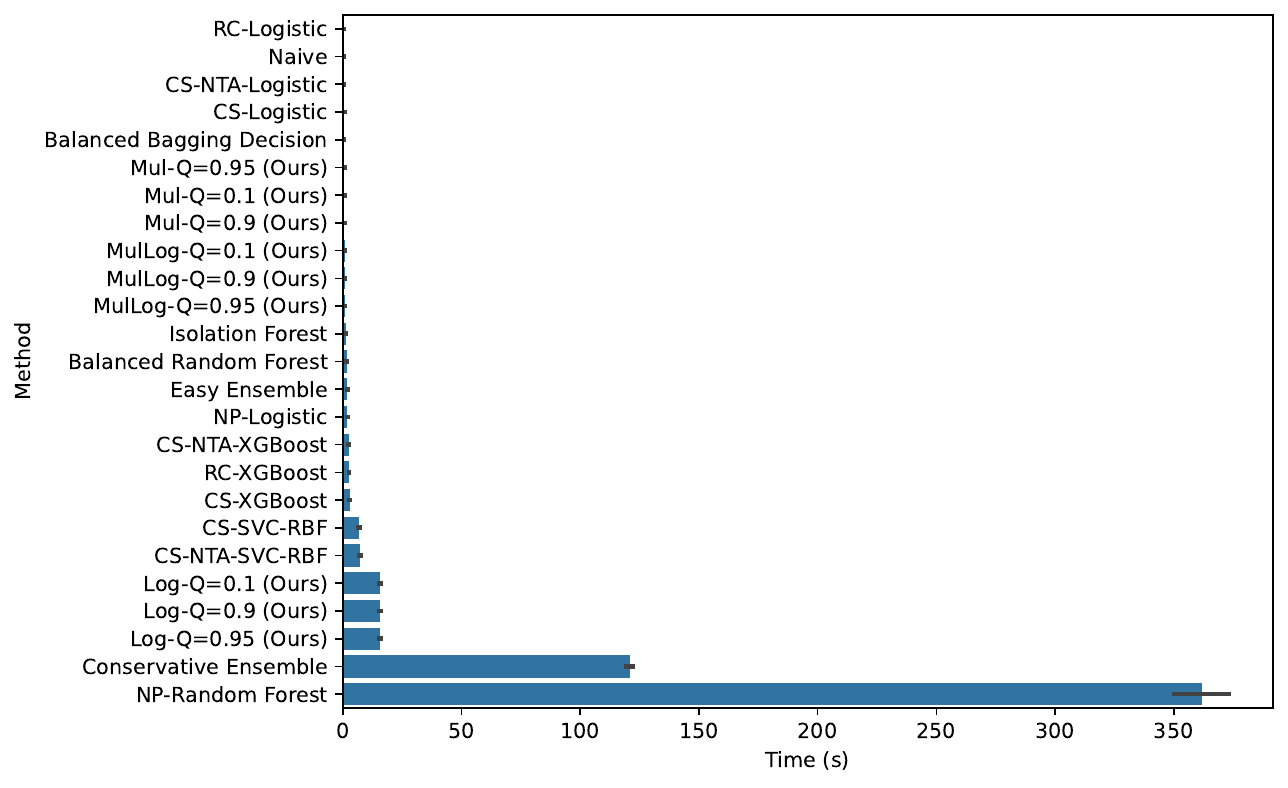}
    \caption{ Barplot of the training time for the different methods on the dataset Credit. Runtime measurements were obtained on a CPU-only server with 4\,GB of RAM. }
      \label{fig:time_credit} 
    \end{figure*}
     \begin{figure*}[!h] 
    \centering 
    \includegraphics[width=0.8\textwidth]{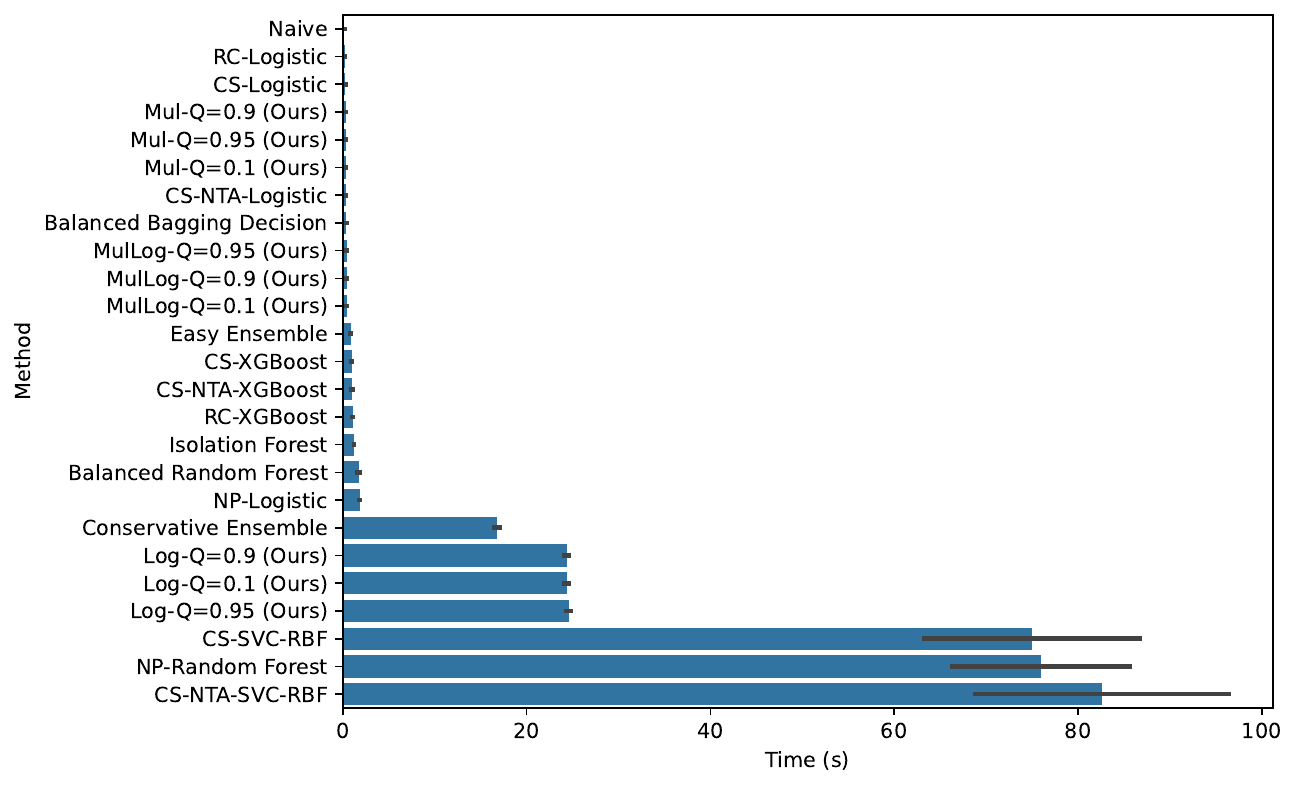}
    \caption{ Barplot of the training time for the different methods on the dataset Corruption. Runtime measurements were obtained on a CPU-only server with 4\,GB of RAM. }
      \label{fig:time_corruption} 
    \end{figure*}
    \begin{figure*}[!h] 
    \centering 
    \includegraphics[width=0.8\textwidth]{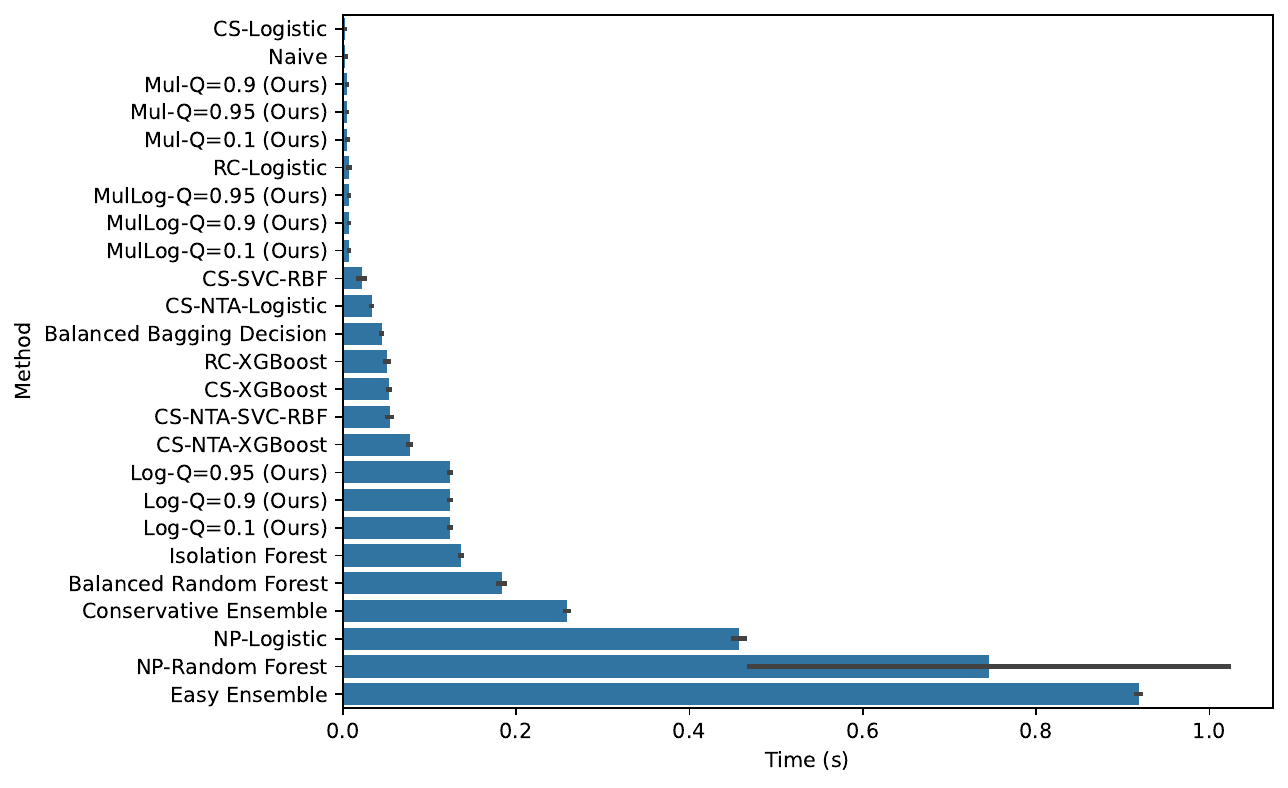}
    \caption{ Barplot of the training time for the different methods on the dataset Breast Cancer. Runtime measurements were obtained on a CPU-only server with 4\,GB of RAM. }
      \label{fig:time_breast} 
    \end{figure*}
\newpage
\section{Discussion of the Umbrella NP algorithm}
\label{discussion_NP}
In  \cite{tong2018neyman}[Proposition 1], the idea is not to ensure the false negative constraint during the training, but to validate it with high probability using a post-processing step, by proceeding the following way:
 \begin{enumerate}
    \item Split the given dataset $\mathcal{D}=\mathcal{D}_{\text{val}}\cup\mathcal{D}_{\text{train}} $ in a validation set $\mathcal{D}_{\text{val}}$ and a train set $\mathcal{D}_{\text{train}}$ such that $\mathcal{D}_{\text{val}}$ contains only features $x_i$ related to class $y_i=1$.
    \item Train a scoring function $\phi$ on the train set such that $g(X)=1-\mathbbm{1}(\phi(X)>0)$ is the related classifier which predicts $0$ when $\phi(X)>0$ and assume that $\phi(X)$ is continuous. $\phi$ can be a logistic regression score, or a support vector decision function for instance.
    \item Then, consider $T_i=\phi(x_i^\text{val})$ for any $x_i^\text{val} \in \mathcal{D}_{\text{val}}$ and rank it such that $T_{(i)}$ is the $i$-th highest scores among $(T_i)_{i \in [N^{\text{val}}_1]}$.
    \item Given an upper bound $\alpha>0$ and a probability of violation $(1-\alpha)^{N^{\text{val}}_1}\leq \delta$, define 
    \begin{equation} 
        k^* =\min \left\{i \in [N^{\text{val}}_1]: \sum_{j=i} \binom{N^{\text{val}}_1}{j} \alpha^{N^{\text{val}}_1-j} (1-\alpha)^{j}\leq \delta \right\} 
    \end{equation}
    then the random classifier $g(X)=1-\mathbbm{1}(\phi(X)\geq T_{(k^*)} ) $ depending on the sample $T_{(k^*)}$ satisfies $\mathbb{P}(g(X)\neq Y|Y=1)\leq \alpha $ with probability $(1-\delta)$.
 \end{enumerate}
 Given the rank statistics $(T_{(i)})_{i \in [N^{\text{val}}_1]}$, this post-processing step finds the smallest threshold that ensures that the false negative constraint is fulfilled with probability $(1-\delta)$ by only assuming that $\phi(X)$ is a continuous random variable and i.i.d data generation.
In this approach, false positive $\mathbb{P}(g(X)\neq Y|Y=0)$ are not minimized explicitly and heavily depend on the scoring function $\phi$.
Coming back to our case where $\alpha\approx 0$, we have $(1-\alpha)^{N^{\text{val}}_1}\approx 1-N^{\text{val}}_1 \alpha\leq \delta $ and thus $(1-\delta)\approx N^{\text{val}}_1 \alpha \leq N _1\alpha \ll 1 $, meaning that with this method the best classifier is $g(X)=1-\mathbbm{1}(\phi(X)\geq T_{(N^{\text{val}}_1)} ) $ and it ensures the false negative constraint with a very low probability.
Note that in dimension $d=1$, using $\phi(X)=X$, there is no need for training set and thus $\mathcal{D}_{\text{val}}=\mathcal{D}$, the postprocessing method thus yields $g=g_{-\infty,b_{\text{naive}}}$.
\newpage
\section{Permuation test on the maximum}
\label{appendix:bootstrap}

Let two independent samples $(W_i)\in [n], (V_i)_{i\in [m]} $ such that $W_i\overset{i.i.d}{\sim} P_1 $, $V_i\overset{i.i.d}{\sim} P_2 $ then
\begin{equation}
    M_W=\max_{i\in [n]} W_i,\quad M_V=\max_{i\in [m]} V_i
\end{equation}
Define $T_{obs}=M_W-M_V$. Then, under interchangeability assumption $H_0$, the two samples are concatenated $(Z_i)_{i\in [n+m]}=(W_i)\cup(V_i)  $. 
Denoting by $\pi: [n+m]\mapsto [n+m]$ a random permutation, then define
$$W_\pi=(Z_{\pi(i)})_{i\in [n]}, V_\pi=(Z_{\pi(n+i)})_{i\in [m]},\quad T_\pi=\max W_\pi -\max V_\pi$$
the p value is defined as follows to test whether the maximum of $P_1$ is higher than the maximum of $P_2$ :
\begin{equation}
    p_{\text{perm}}=\frac{1+\sum_{\pi: [n+m]\mapsto [n+m]} \mathbbm{1}(T_\pi\geq T_{obs})}{1+k_{perm}}
\end{equation}
where $k_{perm}$ is the number of permutation.
 The test is exact 
\begin{equation}
    \mathbb{P}(p_{\text{perm}}\leq p|H_0)\leq p \eqsp .
\end{equation}

Numerically, most often we only sample a $\tilde{k}$-subset of permutation $(\pi_i)_{i\in [\tilde{k}]}$ and estimate,
\begin{equation}
    \hat{p}_{\text{perm}}=\frac{1+\sum_{i=1}^{\tilde{k}} \mathbbm{1}(T_{\pi_i}\geq T_{obs})}{1+\tilde{N}}
\end{equation} 
However, in our case, an explicit expression is tractable. Indeed, denoting by $ (Z_{i,n+m})$ the sorted samples in $ (Z_i)_{i\in [n+m]}$, the following Lemma yields a formula.
\begin{lemma}
    \label{perumation:maxima}
    Assuming that all values $(Z_i)$ are distinct, under $H_0 $,
    for any $i\in [n,n+m], j\in [m,n+m]$
    \begin{align}
        &\mathbb{P}(\max W_\pi\leq Z_{j,n+m},\max V_\pi=Z_{n+m,n+m})=\frac{\binom{j}{n}}{\binom{n+m-1}{n}}\frac{m}{n+m}\\
        &\mathbb{P}(\max W_\pi=Z_{n+m,n+m},\max V_\pi\leq Z_{j,n+m})=\frac{\binom{j}{m}}{\binom{n+m-1}{m}}\frac{n}{n+m}\\
        &\mathbb{P}(\max W_\pi\neq Z_{n+m,n+m},\max V_\pi\neq Z_{n+m,n+m})=0
    \end{align}
    therefore, if $T_{obs}> 0$, and defining $n+m-l\geq m$ s.t $Z_{n+m,n+m}-Z_{n+m-l,n+m}=T_{obs} $,
    \begin{align}
        &\mathbb{P}(T_\pi\geq T_{obs})=\sum_{Z_{n+m,n+m}-Z_{j,n+m}> T_{obs}} \mathbb{P}(\max W_\pi=Z_{n+m,n+m},\max V_\pi= Z_{j,n+m}) \\
        &\mathbb{P}(T_\pi\geq T_{obs})= \mathbb{P}(\max W_\pi=Z_{n+m,n+m},\max V_\pi\leq  Z_{n+m-l,n+m}) \\
        &=\frac{n}{n+m}  \prod_{i=2}^{l} \frac{n-l-1+i}{n-l+m-1+i}
    \end{align}
\end{lemma}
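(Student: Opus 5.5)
Under $H_0$ the uniformly random permutation $\pi$ sends a uniformly random $n$-subset of the (distinct) pooled values to $W_\pi$ and the complement to $V_\pi$. The whole lemma is therefore a counting statement about random subsets, and we never need the continuous distributions. The plan is to condition on which group receives the overall maximum $Z_{n+m,n+m}$, and then count how the remaining elements can be placed below a given order statistic.

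\textbf{Step 1: the third identity.} Exactly one of $W_\pi,V_\pi$ contains $Z_{n+m,n+m}$, so the event that neither maximum equals it is empty.

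\textbf{Step 2: the first identity.} The event $\{\max V_\pi=Z_{n+m,n+m}\}$ has probability $m/(n+m)$, since each index is equally likely to land in either group. Conditionally on it, $W_\pi$ is a uniform $n$-subset of the remaining $n+m-1$ elements. The event $\max W_\pi\le Z_{j,n+m}$ means that this subset lies among the $j$ smallest values. These $j$ values all differ from the top one provided $j\le n+m-1$; the boundary case $j=n+m$ must be treated separately, or the index range read accordingly. This gives the conditional probability $\binom{j}{n}/\binom{n+m-1}{n}$. The second identity is symmetric, with the roles of $n$ and $m$ swapped.

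\textbf{Step 3: the formula for $T_\pi$.} Assume $T_{obs}>0$. By Step 1, $T_\pi\ge T_{obs}>0$ forces $\max W_\pi=Z_{n+m,n+m}$, because otherwise $T_\pi<0$. Then $T_\pi=Z_{n+m,n+m}-\max V_\pi\ge T_{obs}=Z_{n+m,n+m}-Z_{n+m-l,n+m}$, which, by distinctness, is equivalent to $\max V_\pi\le Z_{n+m-l,n+m}$. Summing over the exact location of $\max V_\pi$ recovers the first displayed sum. Applying the second identity with $j=n+m-l$ gives
\[
\frac{n}{n+m}\,\frac{\binom{n+m-l}{m}}{\binom{n+m-1}{m}} .
\]
To reach the product form, expand both binomials as ratios of factorials. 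Their ratio is
\[
\frac{(n+m-l)!\,(n-1)!}{(n-l)!\,(n+m-1)!}=\prod_{i=1}^{l-1}\frac{n-l+i}{n+m-l+i}.
\]
Reindexing $i\mapsto i-1$ gives the product over $i=2,\dots,l$ of $(n-l-1+i)/(n-l+m-1+i)$, which is the claimed form.

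\textbf{Main obstacle.} The difficulty is bookkeeping rather than any real idea. One must check the index conventions: the ranges $i\in[n,n+m]$ and $j\in[m,n+m]$, the requirement $n+m-l\ge m$ that keeps the binomial nonzero, the boundary case $j=n+m$, and whether the inequality is strict or non-strict in the sum (ties are excluded by distinctness). Finally, the factorial ratio must be aligned carefully with the product's index shift. I would verify the final expression on a tiny case, for instance $n=m=1$, $l=1$, where the empty product gives $\tfrac12$.
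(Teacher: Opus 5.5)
Your proposal is correct and follows the paper's proof essentially step by step. You condition on which group receives $Z_{n+m,n+m}$ (probability $n/(n+m)$ or $m/(n+m)$), count the admissible subsets among the $j$ smallest of the remaining $n+m-1$ values, and expand the binomial ratio into the product. You are also right that $j=n+m$ must be excluded, since the formulas would then return $1$.

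One small correction concerns your remark that ties are excluded by distinctness. Distinctness does not make the strict and non-strict versions of the displayed sum coincide. The term $j=n+m-l$ is exactly the event $T_\pi=T_{obs}$, which contains the identity permutation and so has positive probability. The sum must therefore run over $Z_{n+m,n+m}-Z_{j,n+m}\ge T_{obs}$ to equal $\mathbb{P}(T_\pi\ge T_{obs})$.
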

\begin{proof}
    First, note that
    \begin{equation}
        \mathbb{P}(\max W_\pi\neq Z_{n+m,n+m},\max V_\pi\neq Z_{n+m,n+m})=0
    \end{equation}
    because $\pi$ is surjective.
    Second,
    \begin{equation}
        \mathbb{P}(\max W_\pi=Z_{n+m,n+m})=\mathbb{P}(l^*\in \pi([n]), \, \text{where} Z_{n+m,n+m}=Z_{l^*})=\frac{n}{n+m} .
    \end{equation}
    
    Indeed, we first select, among the elements of $[n]$, the preimage of $l^*$, yielding $n$ possible choices. Once this choice is fixed, the permutation $\pi$ is completely determined by assigning the remaining $n+m-1$ values, for which there are $(n+m-1)!$ possibilities. Dividing by the total number of permutations of $[n+m]$, namely $(n+m)!$, gives the desired result.

    Third, denoting by $V_\pi^1,\ldots, V_\pi^m $ the sorted values of $V_\pi$ which define bijectively a subset $\{i_v^1,\ldots,i_v^m\} $ of $[n+m]$,
    \begin{align}
        &\mathbb{P}(\max W_\pi=Z_{n+m,n+m},\max V_\pi\leq Z_{j,n+m})=\mathbb{P}(\max V_\pi\leq Z_{j,n+m}|\max W_\pi=Z_{n+m,n+m})\frac{n}{n+m} \\
        &=\mathbb{P}(\{i_v^1,\ldots,i_v^m\},\text{subset of} \, [j] |\max W_\pi=Z_{n+m,n+m})\frac{n}{n+m} \\
        &= \frac{\binom{j}{m}}{\binom{n+m-1}{m}}\frac{n}{n+m} \\
        &=\frac{j!(n-1)!}{(n+m-1)!(j-m)! }\frac{n}{n+m}
    \end{align}
    Using $j=n+m-k$, we have
    \begin{equation}
        \frac{j!(n-1)!}{(n+m-1)!(j-m)! }=\prod_{i=2}^k \frac{n-k-1+i}{n-k+m-1+i} \eqsp .
    \end{equation}

    The result are symmetric for $ \max V_\pi$, which concludes the proof.
\end{proof}

For discrete data, observations are not necessarily distinct, and therefore the formula is no longer exact.
 In such cases, we still use the same expression as an approximation. More generally, our EVT-based approach is not well suited to discrete data, as the underlying assumptions of extreme value analysis are not naturally satisfied in this setting.


\section{Proofs}
\label{appendix:proofs}

\subsection{Proof of \Cref{lemma:naive}}
In \eqref{eq:naive_classifier}, we have $a_{\text{naive}},b_{\text{naive}} \in \{(a,b):\,\sum_{i:y_i=1}\mathbbm{1}\!\left(g_{a,b}(x_i)\neq 1\right)=0\}$, this implies that for any $i\in [N]$ such that $y_i=1$,
\begin{equation}
    a_{\text{naive}}\leq x_i, \quad b_{\text{naive}}\geq x_i
\end{equation}
thus $a_{\text{naive}}\leq \min \{x_i:y_i=1\} $ and $b_{\text{naive}}\geq \max \{x_i:y_i=1\} $. 
Then, note that $\mathbbm{1}\!\left(g_{a_{\text{naive}},b_{\text{naive}}}(x_i)\neq y_i\right)=1$ as long as $b_{\text{naive}}\geq   x_i$ and $a_{\text{naive}}\leq  x_i $ for any $i\in[N]$ such that $y_i=0$.
 Therefore, the solutions of \eqref{eq:naive_classifier} are ,
\begin{align}
\max \{x_i:y_i=0, x_i<\min \{x_i:y_i=1\}\} < a_{\text{naive}}\leq \min \{x_i:y_i=1\} \\ 
\min \{x_i:y_i=0, x_i>\max \{x_i:y_i=1\}\} > b_{\text{naive}}\geq \max \{x_i:y_i=1\} \, .
\end{align}
$a_{\text{naive}},b_{\text{naive}}=\min \{x_i:y_i=1\},\max \{x_i:y_i=1\}$ is always a valid solution. 

\subsection{Proof of \Cref{theorem:controlling_alpha}}
\begin{proof}
    Denote by
        \begin{align}
     & P_{\text{true}}=\mathbb{P}(U_{N_1+1}>b_{\text{naive}}+\Delta_b | (U_i)_{i\in [N_1]}) \\
     &=\mathbb{P}(U_{N_1+1}>b_{\text{naive}}+\Delta_b | U_{N_1+1}\geq U_{N_1-K,N_1} , (U_i)_{i\in [N_1]})\mathbb{P}(U_{N_1+1}\geq U_{N_1-K,N_1} | (U_i)_{i\in [N_1]})  \\ 
     &P_{\text{pareto}}= (1-G_{\xi,\beta(U_{N_1-K,N_1})}) (b_{\text{naive}}+\Delta_b-U_{N_1-K,N_1}) \frac{K}{N_1} \quad .
        \end{align}
        We have
        \begin{align}
           & \mathbb{P}(P_{\text{true}}\leq \alpha )\geq \mathbb{P}(P_{\text{pareto}}+|P_{\text{true}}-P_{\text{pareto}}|\leq \alpha )\\
            & \geq 1-\mathbb{P}((P_{\text{pareto}}\geq \alpha/2) \cup |P_{\text{true}}-P_{\text{pareto}}|\geq \alpha/2 )
        \end{align}

        \textbf{Bounding $|P_{\text{true}}-P_{\text{pareto}}|$:}
        Denoting by $$L_{\text{quant}}=\left|\frac{K}{N_1}-\mathbb{P}\left(U_{N_1+1}\geq U_{N_1-K,N_1} | (U_i)_{i\in [N_1]}\right)\right|$$ 
        and by assumption
        \begin{align}
            L_{\text{mis}}=|(1-G_{\xi,\beta(U_{N_1-K,N_1})})(b_{\text{naive}}+\Delta_b-U_{N_1-K,N_1})-\mathbb{P}(U_{N_1+1}>b_{\text{naive}}\\
            +\Delta_b | U_{N_1+1}\geq U_{N_1-K,N_1} , (U_i)_{i\in [N_1]})|\leq o_{K\to \infty}(\mathbb{P}(U_{N_1+1}\geq U_{N_1-K,N_1} | (U_i)_{i\in [N_1]}))
        \end{align}
        By triangular inequality 
        \begin{equation}
            |P_{\text{true}}-P_{\text{pareto}}|\leq L_{\text{quant}} (1-G_{\xi,\beta(U_{N_1-K,N_1})}) (b_{\text{naive}}+\Delta_b^*-U_{N_1-K,N_1})+o_{K\to \infty}((\fracD{K}{N_1}+L_{\text{quant}})^2) \eqsp .
        \end{equation}
        
        By Dvoretzky-Kiefer-Wolfowitz theorem using i.i.d assumption, using $\delta_{\text{quant}}>0$,
        \begin{align}
        \mathbb{P}(L_{\text{quant}}> \sqrt{\frac{\log(2/\delta_{\text{quant}})}{2N_1}})\leq \delta_{\text{quant}} \eqsp.
        \end{align}

        Since the cumulative distribution is non decreasing,
        \begin{equation}
            \mathbb{P}( (1-G_{\xi,\beta(U_{N_1-K,N_1})})(b_{\text{naive}}+\Delta_b-U_{N_1-K,N_1})\leq Q_K)\geq \mathbb{P}((1-G_{\xi,\beta(U_{N_1-K,N_1})})(\Delta_b)\leq Q_K)
        \end{equation}

        and by assumption we have,
        \begin{equation}
           \mathbb{P}( (1-G_{\xi,\beta(U_{N_1-K,N_1})})(\Delta_b))\leq Q_K\geq 1-\delta_{\text{est}}\eqsp .
        \end{equation}

        Finally, using $K\geq \sqrt{N_1}$, we have with probability $1-\delta_{\text{quant}}-\delta_{\text{est}}$,
        \begin{equation}
            |P_{\text{true}}-P_{\text{pareto}}|\leq \sqrt{\frac{\log(2/\delta_{\text{quant}})}{2N_1}} Q_K+o_{K\to \infty}((1+\sqrt{\frac{\log(2/\delta_{\text{quant}})}{2}})^2/N_1) \eqsp .
        \end{equation}
        Using $Q_K=o_{K\to \infty}(1/K)$, we have with probability $1-\delta_{\text{quant}}-\delta_{\text{est}}$,
        \begin{equation}
            |P_{\text{true}}-P_{\text{pareto}}|\leq o_{K\to \infty}(2(1+\sqrt{\frac{\log(2/\delta_{\text{quant}})}{2}})^2/N_1) \eqsp .
        \end{equation}
        \textbf{Bounding $P_{\text{pareto}}$:}
        \begin{equation}
            \mathbb{P}( P_{\text{pareto}}\leq Q_K K/N_1=o_{K\to \infty}(1/N_1))\geq 1-\delta_{\text{est}}
        \end{equation}
        \textbf{Conclusion:}
        This yields,
        \begin{equation}
            \mathbb{P}(P_{\text{true}}\leq o_{K\to \infty}(4(1+\sqrt{\frac{\log(2/\delta_{\text{quant}})}{2}})^2/N_1) )\geq 1-\delta_{\text{quant}}-\delta_{\text{est}}\eqsp .
        \end{equation}


     \end{proof}

     \subsection{Proof of \Cref{prop:excess}}

     \begin{proof}

        First, we prove the following Lemma.

        \begin{lemma}
        \label{lemma:bound_xi_neg}
        Assume that $W$ follow a Generalized Pareto Distribution $G_{\xi,\beta}$ with parameter $\xi\leq 0$ and $\beta>0$, then 
        $ \mathbb{P}(W\geq q_{1-Q})\leq Q$ using $q_{1-Q} =-\beta\log(Q) (1+\exp(-1))/(1-\xi) $ for any $0<Q\leq \exp(-1)$.
     \end{lemma}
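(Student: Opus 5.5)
The plan is to compare the quantile $q_{1-Q}$ directly with the explicit survival function of the Generalized Pareto Distribution. Write $s=-\xi\geq 0$, $L=-\log(Q)\geq 1$ (since $Q\leq \exp(-1)$) and $c=1+\exp(-1)$, so that $q_{1-Q}/\beta = cL/(1+s)$. By \Cref{prop:POT}, $\mathbb{P}(W\geq y)=(1-sy/\beta)_+^{1/s}$ for $s>0$ and $\exp(-y/\beta)$ for $s=0$. The claim $\mathbb{P}(W\geq q_{1-Q})\leq Q=\exp(-L)$ therefore reduces to an elementary inequality in $(s,L)$.

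\emph{Case $s=0$.} Here $\mathbb{P}(W\geq q_{1-Q})=\exp(-cL)\leq \exp(-L)$ because $c>1$ and $L>0$.

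\emph{Case $s>0$, with $q_{1-Q}$ at or beyond the endpoint.} If $scL/(1+s)\geq 1$, then $q_{1-Q}\geq \beta/s$, the right endpoint. The probability is then $0$ and there is nothing to prove.

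\emph{Case $s>0$, with $q_{1-Q}$ inside the support.} Then $1-scL/(1+s)>0$, and it suffices to show
\[
h(L):=\frac{1}{s}\log\Bigl(1-\frac{scL}{1+s}\Bigr)+L\leq 0 .
\]
I will treat $h$ as a function of $L$ on $[1,(1+s)/(sc))$; the plan has two steps.
\begin{enumerate*}[label=(\roman*)]
\item $h$ is concave, being a concave function of an affine map plus a linear term. Moreover
\[
h'(1)=1-\frac{c}{1+s-sc}=1-\frac{c}{1-s\exp(-1)}<0 ,
\]
since $c>1$ and $1-s\exp(-1)\leq 1$. Concavity then gives $h(L)\leq h(1)+h'(1)(L-1)\leq h(1)$ for $L\geq 1$. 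If $1$ itself lies beyond the admissible range, every admissible $L$ falls in the endpoint case above.
\item This reduces everything to the single-variable inequality $h(1)\leq 0$, that is,
\[
\varphi(s):=1-\frac{(1+\exp(-1))s}{1+s}\leq \exp(-s)\qquad\text{whenever } \varphi(s)>0 .
\]
\end{enumerate*}

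The main obstacle is this last inequality. The constant $1+\exp(-1)$ looks tuned for it, so a crude bound will not do. My first attempt is to study $\psi(s)=\exp(-s)-\varphi(s)$.
\begin{enumerate*}[label=(\alph*)]
\item At the start, $\psi(0)=0$ and $\psi'(0)=-1+c=\exp(-1)>0$, so $\psi$ is positive near $0$.
\item On the admissible range $\varphi(s)>0$, i.e.\ $s<1/\exp(-1)=e$, I will show $\psi$ stays nonnegative. I will bound $\exp(-s)$ below by $1-s+s^2/2-s^3/6$ for small $s$, and for larger $s$ compare the decreasing function $\varphi$ to the decreasing function $\exp(-s)$ on a few subintervals.
\end{enumerate*}
Spot checks support this: at $s=0.5$, $\varphi\approx 0.544<0.607$; at $s=1$, $\varphi\approx 0.316<0.368$; at $s=2$, $\varphi\approx 0.088<0.135$. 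If a clean analytic argument resists, I will fall back on showing that $\psi$ has a single critical point on $(0,e)$ and is positive at both ends of the interval.
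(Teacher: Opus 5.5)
Everything up to your final reduction is correct. The three cases, the concavity of $h$ in $L$ with $h'(1)=1-c/(1-se^{-1})<0$, and the observation that $s\geq e$ puts every $L\geq 1$ in the endpoint case together reduce the lemma to $\varphi(s)\leq e^{-s}$ on $(0,e)$. The gap is that this inequality, which you rightly single out as the crux, is never proved. Three spot values are not an argument. The Taylor-plus-subintervals plan is only a sketch: it can be pushed through, because the interior local minimum of $\psi$ is about $0.044>0$, but the cubic Taylor bound only suffices up to $s\approx 1.1$, and beyond that you would need a fairly fine grid. The fallback rests on a false premise. Indeed $\psi'(s)=c/(1+s)^2-e^{-s}$ vanishes exactly where $(1+s)^2e^{-s}=c$. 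That function equals $1<c$ at $s=0$, rises to $4/e>c$ at $s=1$, and falls to $(1+e)^2e^{-e}<c$ at $s=e$. So $\psi$ has two critical points in $(0,e)$: a local maximum near $s\approx 0.51$ and a local minimum near $s\approx 1.59$. Also, $\psi(0)=0$ is not positive.

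The missing idea is a one-line monotonicity argument. Since $(1+s)\varphi(s)=1-se^{-1}$, the claim is equivalent to $k(s):=(1+s)e^{-s}-1+se^{-1}\geq 0$. Now $k(0)=0$ and $k'(s)=e^{-1}-se^{-s}\geq 0$ because $\max_{x\geq 0}xe^{-x}=e^{-1}$, so this holds for every $s\geq 0$. This is precisely the paper's argument at $z=1$. The paper compares $q_{1-Q}$ with the exact GPD quantile $\beta(1-Q^{\gamma})/\gamma$, $\gamma=-\xi$, which always lies inside the support, so no endpoint case arises. It then shows that $\gamma\mapsto\gamma(1+e^{-1})z-(1+\gamma)(1-e^{-\gamma z})$, with $z=-\log Q\geq 1$, vanishes at $0$ and has derivative $(z-1)(1-e^{-\gamma z})+ze^{-1}-\gamma z e^{-\gamma z}\geq 0$, again by $xe^{-x}\leq e^{-1}$. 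At $z=1$ that function is exactly your $k$. So your concavity step in $L$ is a valid substitute for the paper's handling of general $z$ through the nonnegative term $(z-1)(1-e^{-\gamma z})$. Once you insert the monotonicity of $k$, your proof is complete.
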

     \begin{proof}
     \textbf{Case $\xi=0$: } $G_{\xi,\beta}$ is an Exponential distribution of parameter $ 1/\beta$, we have $\mathbb{E}(W)=\beta$ and $q_{1-Q} =-\beta \log(Q)$ and $1\leq 1+\exp(-1)$ .

     \textbf{Case $\xi<0$: } We have $\mathbb{E}(W)=\beta/(1-\xi)$ and $q_{1-Q}^\xi =\beta (\exp(-\xi \log(Q))-1)/\xi$ is exactly the quantile of level $Q$. We have to show 
     $$ -\log(Q) (1+\exp(-1))/(1-\xi)\geq (\exp(-\xi \log(Q))-1)/\xi$$ 
     Denoting $z=-\log(Q)\geq 0$ and $\gamma=-\xi\geq 0$, and multiplying by $-\xi (1-\xi) $ we study the function $h:\gamma\in \mathbb{R}_{\geq 0}\mapsto \gamma (1+\exp(-1))z -(1+\gamma)(1-\exp(-\gamma z))$ and aim to show $h(\gamma)\geq 0$. We have
     \begin{align}
        &h'(\gamma)=(1+\exp(-1))z-(1-\exp(-\gamma z))-(1+\gamma)z\exp(-\gamma z) \\
        &h'(\gamma)=(z-1)(1-\exp(-\gamma z)) +z\exp(-1)-\gamma z \exp(-\gamma z)
     \end{align}
     Noting that $|x\exp(-x)|\leq \exp(-1) $, we have $h'(\gamma)\geq 0$ as long as $z\geq 1$. Since $h(0)=0$, we have thus $h(\gamma)\geq 0 $, which yields the inequality.
     \end{proof}

        By \Cref{lemma:bound_xi_neg}, we have for any $\xi\leq 0$, $\Delta_b^*=-\mathbb{E}_{D\sim G_{\xi,\beta(U_{N_1-K,N_1} )}}(D) (1+\exp(-1))\log(1/(K\log(K)))$ verifies \eqref{eq:estimator_cond} with $\delta_{\text{est}}=0$ since $\mathbb{E}_{D\sim G_{\xi,\beta(U_{N_1-K,N_1} )}}(D)=\beta(U_{N_1-K,N_1} )/(1-\xi)$ and $1/(K\log(K))\leq \exp(-1) $ for $K\geq 3$,
        then the results holds using \eqref{eq:concentration}.
    \end{proof}

    \subsection{Multidimensional case inflation}
    \label{appendix:multidim}
    Assuming independence of coordinates
    \begin{align}
    &\mathbb{P}(g(X)\neq 1 |Y=1)=\mathbb{P}(\bigcup_{i=1}^d (g_{a_i,b_i}(X^{i})\neq 1) |Y=1)=1-\prod_{i=1}^d \mathbb{P}(g_{a_i,b_i}(X^{i})= 1 |Y=1)\\
    &\leq 1 -(1-\alpha)^d\approx d\alpha 
    \end{align}
    Without assuming anything 
     \begin{align}
    &\mathbb{P}(g(X)\neq 1 |Y=1)=\mathbb{P}(\bigcup_{i=1}^d (g_{a_i,b_i}(X^{i})\neq 1) |Y=1)
    &\leq \sum_{i=1}^d \mathbb{P}( (g_{a_i,b_i}(X^{i})\neq 1) |Y=1)\leq d_{\text{selected}} \alpha 
    \end{align}
    where $d_{\text{selected}}$ is the number of dimensions where $g_{a_i,b_i}$ is not constant.






\end{document}